\definecolor{darkgreen}{rgb}{0,0.5,0}
\definecolor{darkred}{rgb}{0.7,0,0}
\definecolor{teal}{rgb}{0.3,0.8,0.8}
\definecolor{blue}{rgb}{0,0,1}
\definecolor{purple}{rgb}{0.5,0,1}
\newcommand{\kibitz}[2]{\ifnum\Comments=1\textcolor{#1}{#2}\fi}
\newcommand{\ci}{\perp\!\!\!\perp}
\theoremstyle{plain}
\newtheorem{theorem}{Theorem}
\newtheorem{corollary}[theorem]{Corollary}
\theoremstyle{definition}
\newtheorem{definition}{Definition}
\tikzset{
    -Latex,auto,node distance =1 cm and 1 cm,semithick,
    state/.style ={ellipse, draw, minimum width = 0.7 cm},
    point/.style = {circle, draw, inner sep=0.04cm,fill,node contents={}},
    square/.style = {rectangle, draw, inner sep=0.04cm,fill,node contents={}},
    bidirected/.style={Latex-Latex,dashed},
    el/.style = {inner sep=2pt, align=left, sloped}
}
\title{A Unifying Causal Framework for Analyzing Dataset Shift-stable Learning Algorithms}
\author{
 Adarsh Subbaswamy \\
  Department of Computer Science\\
  Johns Hopkins University\\
  \texttt{asubbaswamy@jhu.edu} \\
   \And
 Bryant Chen \\
  Brex Inc.\\
  \And
  Suchi Saria\\
  Department of Computer Science\\
  Johns Hopkins University \& Bayesian Health\\
}
\begin{document}
\date{Published May 19, 2022 in the Journal of Causal inference}
\maketitle
\begin{abstract}
Recent interest in the external validity of prediction models (i.e., the problem of different train and test distributions, known as \textit{dataset shift}) has produced many methods for finding predictive distributions that are invariant to dataset shifts and can be used for prediction in new, unseen environments. However, these methods consider different types of shifts and have been developed under disparate frameworks, making it difficult to theoretically analyze how solutions differ with respect to stability and accuracy. Taking a causal graphical view, we use a flexible graphical representation to express various types of dataset shifts. Given a known graph of the data generating process, we show that all invariant distributions correspond to a causal hierarchy of graphical operators which disable the edges in the graph that are responsible for the shifts. The hierarchy provides a common theoretical underpinning for understanding when and how stability to shifts can be achieved, and in what ways stable distributions can differ. We use it to establish conditions for minimax optimal performance across environments, and derive new algorithms that find optimal stable distributions. Using this new perspective, we empirically demonstrate that that there is a tradeoff between minimax and average performance.
\end{abstract}

\section{Introduction}
Statistical and machine learning (ML) predictive models are being deployed in a number of high impact applications, including healthcare \citep{strickland2018hospitals}, law enforcement \citep{winston2018palantir}, and criminal justice \citep{angwin2016machine}. These safety-critical applications have a high cost of failure---model errors can lead to incorrect decisions that have a profound impact on the quality of human lives---which makes it important to ensure that systems being developed and deployed for these problems behave \emph{reliably} (i.e., they perform to their specification). To do so, developers are forced to reason in advance about likely sources of failure and address them prior to deployment (i.e., during model training). A key source of failure is due to \emph{dataset shifts} \citep{candela2009dataset,finlayson2021clinician}: differences between the environment in which training data was collected and the environment in which the model will be deployed that manifest as changes in the data distribution. These differences can arise due to deploying a model at a new site from which data was unavailable during training, or due to natural variations that occur over time. Failing to account for these differences can result in model predictions with worse performance (i.e., expected loss) than anticipated.

Across a number of application domains, the recent COVID-19 pandemic has demonstrated ways in which dataset shifts can induce model failures. For example, the pandemic resulted in a drastic shift in online retail and the consumer packed goods industries: during the onset of the pandemic, the predictive algorithms powering Amazon's supply chain failed due to the sudden increased demand for household supplies (e.g., bottled water and paper products), resulting in unprecedented item shortages and delivery delays \citep{dickson2020coronavirus}.

Beyond changes to customer behavior, dataset shift has been identified as a key challenge to ensuring reliability in safety-critical domains such as healthcare (see, e.g., the example ways in which dataset shift can occur in medical applications in \cite{finlayson2021clinician}). Consider the following examples: Long term (e.g., 3 year) patient mortality prediction models are used to help determine which patients may need long term support after being discharged from the hospital. In one study, the authors trained a model to predict 3 year patient mortality from electronic health record (EHR) data at a single hospital. The authors found that, for 68\% of laboratory tests, the timing of the laboratory test orders was more predictive of mortality for the model than the corresponding values of those tests \cite{agniel2018biases}. As a result, the model learned predictive dependencies between the time of day when a lab test was ordered and patient mortality. These dependencies are brittle: they are highly variant across hospitals because the timing of lab tests is determined by hospital-specific policies and physician-specific preferences \cite{grytten2003practice,cutler2019physician}. Models which have learned these brittle dependencies can experience significant deterioration in performance and become unsafe to use \cite{schulam2017reliable}.

As another example, consider \citep{zech2018variable}, in which the authors trained a model to diagnose pneumonia from chest X-rays. While the model was found to be very accurate on new patients at the medical center where it was developed, this performance deteriorated significantly when applied at new, but similar, medical centers. Their analysis showed that the model had learned dependencies between stylistic features (e.g., text, orientation, coloring) present in the X-ray and pneumonia. These associations varied widely across hospitals because the choice of stylistic features depended on the X-ray equipment, hospital policies, and technician preferences. 

These examples demonstrate that dataset shifts can arise from a variety of changes (i.e., interventions) in the underlying data generating process (DGP) such as changes in behavior (e.g., shifts in clinician treatment patterns) or changes in data acquisition (e.g., new X-ray machines and settings). Preventing failures due to these  kinds of shifts requires a \textit{causal} understanding of the parts of the data generating process that can shift, and learning models of \emph{stable} distributions that are invariant to these shifts.\footnote{We will refer to distributions that are stable to dataset shifts as ``shift-stable'' or simply ``stable''.}.

Given that dataset shifts can happen in nearly every domain where predictive models are used, and given how serious the consequence of failures due to these shifts can be, it is critical to be able to ensure the reliability of models under such shifts: That is, we need to understand under a given set of dataset shifts, how will a model's behavior change? What can be said about a model's \emph{stability} (i.e., are the model's predictions still accurate after the shifts)? In the X-ray example, a model developer wants to know: what shifts can lead to model instability? Would shifts in color encoding schemes or upgrades to X-ray equipment lead to instability and deteriorate model accuracy? Has the model learned any dependencies (e.g., between pneumonia and choice of equipment) that will lead to this instability? For models trained using different algorithms, what guarantees do they give about stability to shifts in color schemes? How do the accuracies of these models differ under these dataset shifts? What guarantees can be made about a model's worst-case performance under such shifts? Lacking common footing for framing stability and dataset shift, it is difficult to begin to answer these questions and compare algorithms.

A common framing of dataset shift is to assume limited data from a clearly defined ``target'' environment or distribution of interest is used (along with more plentiful data from a ``source'' environment) to make inferences about the target environment. This framing allows an analyst to \emph{``reactively''}\footnote{The term ``reactive'' to describe approaches which use (possibly unlabeled) data from the target domain during learning was coined in \citep{subbaswamycounterfactual}.} adjust to target data samples. Reactive approaches to addressing dataset shift exist across multiple fields of study. Some examples include methods for domain adaptation in machine learning (for an overview, see \citep{candela2009dataset}), generalizability and transportability in causal inference (e.g., \citep{pearl2011transportability,stuart2015assessing,bareinboim2016causal,degtiar2021review}), and sample selection bias in statistics and econometrics (e.g., \citep{heckman1974shadow,heckman1979sample,winship1992models,vella1998estimating}). Reactive approaches to model training require data from the target environment which makes it difficult to learn a model from source data alone which  will perform well in a new, unseen environment. In this case, it is important to instead use \emph{proactive} learning approaches which learn models that are stable to any anticipated problematic shifts.

One common class of proactive learning methods is \emph{declarative} in nature. These methods allow users to specify dependencies (i.e., causal relationships) between variables in the dataset that are likely to experience a dataset shift. That is, the data generating process is expected to differ between source and target environments due to an unknown intervention on the specified causal relationships. For instance, in the previously discussed X-ray example, a user might want to specify that the model should be stable to changes in scanner manufacturer, the choice of X-ray orientation (front-to-back vs back-to-front), or the color encoding scheme, since all of these are problematic shifts that are likely to occur. Example learning methods include approaches which find stable subsets of features \citep{magliacane2018domain, subbaswamycounterfactual}, approaches which learn models under hypothetically stable data generating processes \citep{subbaswamy2019transport,subbaswamy2020spec}, and ``counterfactual'' approaches which compute counterfactual features \citep{subbaswamycounterfactual,schulam2017reliable,veitch2021counterfactual} or perform data augmentation \citep{ilse2021selecting} to remove unstable dependencies. A key feature of declarative learning methods is that they can give guarantees about the stability of model predictions to changes in the specified shifts. When a user specifies that they desire invariance to the choice of X-ray orientation, then the declarative method will find a stable solution satisfying this specification. However, this requires domain expertise to be able to specify the likely problematic shifts to which stability is desired. A notable exception is the method of  \citep{subbaswamy2020spec}, which learns candidate shifts that occurred across datasets and allows users to choose which invariances to enforce. Additionally, while different declarative methods can guarantee stability, it is unknown what tradeoffs exist between methods with respect to accuracy. For example, models trained using stable feature subsets vs counterfactuals can both satisfy stability to, e.g., the choice of X-ray orientation, but we currently struggle to answer how their accuracy will compare and differ under shifts in X-ray orientation preferences. While both are stable, is one more accurate under shifts than the other?

A second class of proactive methods are \emph{imperative} in nature. These methods take in datasets collected from multiple, heterogeneous environments and automatically extract invariant predictors from the data without user input \citep{rojas2018invariant,arjovsky2019invariant,bellot2020generalization,koyama2020out}. Examples of these methods are those that compute features sets \citep{rojas2018invariant} or representations \citep{arjovsky2019invariant,bellot2020generalization} that yield invariant predictors. In the X-ray example, imperative methods would require datasets collected from a large number of health centers which diversely represented the sets of shifts that could be observed (e.g., the datasets differ in terms of scanner manufacturer, X-ray orientation, and encoding schemes).  An advantage of such approaches is that they do not require domain expertise in order to determine invariances. These methods often provide theoretical guarantees about minimax optimal performance (i.e., that they have the smallest worst-case error) across the input distributions. Thus, using an imperative approach a model developer can guarantee good worst-case performance at new hospitals that ``look like'' a mixture of the training hospitals. However, they generally do not provide guarantees about stability to a set of specified shifts: we do not know the ways in which the datasets differ (or by how much), so we cannot answer if the model is stable to shifts in scanner manufacturers, X-ray orientations, or encoding schemes.

The difficulties of understanding model behavior within and across each thread of work prevent rigorous analysis of the reliability of models under dataset shifts. In reality, we are presented with a prediction problem in which the data has been collected and generated under some DGP. Dataset shifts can then lead to changes to arbitrary pieces of the DGP. Thus, there is a need for a framework that enables us to answer the fundamental questions about model behavior under changes to the DGP. This would provide common ground to compare algorithms which address dataset shift, and to generate generalizable insights from methods which address particular instances of shifts.

\subsection{Contributions}
In this paper, we provide a unifying framework for specifying dataset shifts that can occur, analyzing model stability to these shifts, and determining conditions for achieving the lowest worst-case error (i.e., minimax optimal performance) across environments produced by these shifts. This provides common ground so that we can begin to answer fundamental questions such as: To what dataset shifts are the model's predictions stable vs unstable? Has the model learned a predictive relationship that is stable to a set of prespecified shifts of interest? How will the model's performance be affected by these shifts?  For models trained using different methods, what guarantees do they provide about stability and accuracy? 

The framework centers around two key requirements: First, a known causal graphical representation of the environment data generating process which includes specifications of what can shift. These specifications take the form of marked unstable edges in the graph which represent causal dependencies between variables that can shift across environments. We consider arbitrary shifts (i.e., interventions) to causal mechanisms in the graph, as opposed to, e.g.,  constraining shifted distributions to be within bounded norm-balls of the training data. This specification entails commonly studied instances of dataset shift (such as label shifts, covariate shifts, and conditional shifts), but also handles the more general, unnamed shifts that we expect to see in practice. Second, we restrict our analysis to methods whose target distribution can be expressed graphically. This entails algorithms which do not learn intermediate feature representations, but instead operate directly on the observed variables. For models which do not induce a graphical representation, we discuss how our results might be used to probe these models for their stability properties and discuss opportunities for future work to bridge this gap.

Our main contribution is the development of a causal hierarchy of stable distributions, in which distributions at higher levels of the hierarchy guarantee lower worst-case error. The levels of the hierarchy provide insight into how stability can be achieved: the levels correspond to three operators on the graphical representation of the environment, which modify the graph to produce \emph{stable distributions}---the learning targets for stable learning algorithms (Definitions \ref{def:lvl1},\ref{def:lvl2},\ref{def:lvl3}). We further show that the operators have different stability properties: higher level operators more precisely remove unstable edges from the graph when producing stable distributions (Corollary \ref{corollary:precision}). Using this graphical characterization of stability, we provide a simple graphical criterion for determining if a distribution is stable to a set of prespecified shifts: a distribution is stable if it modifies the graph to remove the corresponding unstable edges (Theorem \ref{thm:stable}). We then address questions about the accuracy of different stable solutions by showing how the hierarchy provides a causal characterization of the minimax optimal predictor: the predictor which achieves the lowest worst-case error across shifted environments (Proposition \ref{prop:lvl3-opt}). Surprisingly, we find that frequently studied intervention-invariant solutions generally do not achieve this minimax optimal performance. Finally, we demonstrate through a series of semisynthetic experiments that there is a tradeoff between minimax and average performance. Through these contributions, we provide a common theoretical underpinning for understanding model behavior under dataset shifts: when and how stability to shifts can be achieved, in what ways stable distributions can differ, and how to achieve the lowest worst-case error across shifted environments.

\section{Related Work}
While the focus of this work is on statistical and machine learning models, we briefly discuss concepts related to dataset shift that have been studied in other fields. In particular, \emph{external validity}, or the ability of experimental findings to generalize beyond a single study, has long been an important goal in the social and medical sciences \citep{campbell1963experimental}. For example, practitioners (such as clinicians) who want to assess the results of a randomized trial must consider how the results of the trial relate to their target population of interest. This need has led to much discussion and work on assessing the \emph{generalizability} of randomized trials (see, e.g., \cite{rothwell2010commentary,cole2010generalizing,stuart2011use}). More recently, methodological work in causal inference has focused on \emph{transportability} (see \cite{degtiar2021review} for a review). For example, researchers have developed causal graphical methods for determining when and how experimental findings can be transported from one population or setting to a new one. (see, e.g., \citep{pearl2011transportability,pearl2014external,bareinboim2016causal,dahabreh2019generalizing}). Generalizability is also of importance in economics research \citep{camerer2011promise}, with much methodological work focusing on the problem of \emph{sample selection bias} resulting from non-random data collection (see, e.g., \citep{heckman1974shadow,heckman1979sample,winship1992models}). Selection bias leads to systematic differences between the observed data and the general population, and thus is related to problems of external validity, which consider different environments or populations.

Returning to the focus of this work, the problem of differing train and test distributions in predictive modeling is known as \emph{dataset shift} in machine learning \citep{candela2009dataset}. Classical approaches, such as domain adaptation, assume access to unlabeled samples from the target distribution which they use to reweight training data during learning or extract invariant feature representations (e.g., \cite{huang2007correcting,zhang2013domain,ganin2016domain,gong2016domain}). More recently, work on \emph{statistical transportability} has produced sound and complete algorithms for determining how data from multiple domains can be synthesized to compute a predictive conditional distribution in the target environment of interest \citep{pearl2011transportability,correa2019statistical}.
These methods \emph{reactively} adjust to target data. In many practical scenarios, however, it is not possible to get samples from all possible target distributions. Instead, this requires \emph{proactive} methods that make assumptions about the set of possible target environments in order to learn a model from source data that can be applied elsewhere \citep{subbaswamycounterfactual}. Work on proactive methods has primarily focused on either bounded or unbounded shifts. Bounded shifts have been studied through the lens of \emph{distributional robustness}, often assuming shifts within a finite radius divergence ball (e.g., \cite{sinha2017certifying,duchi2016variance}). \cite{heinze2020conditional} consider robustness to bounded magnitude interventions in latent style features. \cite{rothenhausler2018anchor} consider bounded and unbounded mean-shifts in mechanisms (in which the means of certain variables vary by environment). \cite{oberst2021regularizing} build on this to allow for stochastic mean-shifts. In this paper, we focus on unbounded shifts: in safety-critical domains the cost of failure is high and it can be difficult to accurately specify bounds.\footnote{Though the focus of this paper is on arbitrary shifts, we note that a ``stability accuracy tradeoff'' has also been observed in works on bounded distributional robustness. See \citep{duchi2021learning} for an example.}

Many proactive methods use datasets from multiple source environments to train invariant models to predict in new, unseen environments. \cite{muandet2013domain} propose a kernel method to find a data transformation minimizing the difference between feature distributions across environments while preserving the predictive relationship. Invariant risk minimization (IRM) learns a representation such that the optimal classifier is invariant across the input environments \citep{arjovsky2019invariant, ahuja2020invariant}. Similar works learn invariant predictors by seeking derivative invariance across environments \citep{bellot2020generalization,koyama2020out}. These methods learn representations, so the predictors do not induce a graphical representation. Despite this, we discuss how our graphical results can be applied to probe these methods for their stability properties.  IRM establishes its ability to generalize to new environments using  the framework of \emph{invariant causal prediction} (ICP) \citep{peters2016causal}. Under ICP, the minimax optimal performance of a causally invariant predictor is tied to assuming shifts occur in all variables except the target prediction variable. In reality, however, specific shifts occur and we want to determine which ones to protect against (and how). In this work, we take this approach by starting with the data generating process, and derive stable solutions to prespecified sets of shifts.

Other proactive methods make explicit use of connections to causality. Related to ICP, \cite{rojas2018invariant} use multiple source environment datasets to find a feature subset that yields an invariant conditional distribution. This has also been extended to the reactive case in which unlabeled target data is available (see also \cite{magliacane2018domain}). For discrete variable settings in which data from only one source environment are available and there is no unobserved confounding, covariate balancing techniques have been used to determine the causal features that yield a stable conditional distribution \cite{kuang2018stable,kuang2020stable}. Other causal methods assume explicit knowledge of the graph representing the DGP instead of requiring multiple datasets. Explicitly assuming no unobserved confounders, \cite{schulam2017reliable} protect against shifts in continuous-time longitudinal settings by predicting \emph{counterfactual} outcomes. \cite{subbaswamycounterfactual} find a stable feature set that can include counterfactual variables, assuming linear mechanisms. \cite{veitch2021counterfactual} regularize towards counterfactual invariance so that a model learns to predict only using causal associations. Other works consider using counterfactuals generated by human annotation \citep{kaushik2019learning}, data augmentation (see \citep{kaushik2020explaining,ilse2021selecting} for discussion of the relationship between causality and data augmentation),  or active learning \citep{sundin2019active} to improve model robustness. \cite{subbaswamy2019transport} use \emph{selection diagrams} \citep{pearl2011transportability} to identify mechanisms that can shift and find a stable \emph{interventional} distribution to use for prediction. More recently, end-to-end approaches have been developed which relax the need for the graph to be known beforehand, instead learning it from data \citep{zhang2020domain,subbaswamy2020spec}. In this paper we provide a common ground for understanding the different types of stable solutions and for finding stable predictors with the best worst-case performance.

\section{A Hierarchy of Shift-Stable Distributions}\label{sec:hierarchy}
In this section we present a causal hierarchy of stable distributions that are invariant to different types of dataset shifts. First, we will introduce a general graphical representation for specifying dataset shifts that can occur (Section \ref{subsec:shifts}). We use this representation to give a simple graphical criterion for determining if a distribution is stable to a set of prespecified shifts. Then, we present the hierarchy and show that the levels of the hierarchy correspond to three operators on the graphical representation which modify the graph to produce stable distributions (Section \ref{subsec:hierarchy}). This allows different stable distributions to be compared in terms of how they modify the graphical representation. We further show that the hierarchy is nested, and thus, it has implications on the existence of stable distributions (Section \ref{subsec:hierarchy-consequences}). We begin by introducing necessary background on causal graphs (Section \ref{subsec:prelims}). Proofs of results are in Appendix \ref{app:proofs}.

\subsection{Preliminaries}\label{subsec:prelims}
\subsubsection{Notation}  Throughout the paper sets of variables are denoted by bold capital letters while their particular assignments are denoted by bold lowercase letters. We will consider graphs with directed or bidirected edges (e.g., $\leftrightarrow$). Acyclic will be taken to mean that there exists no purely directed cycle. The sets of parents, children, ancestors, and descendants in a graph $\mathcal{G}$ will be denoted by $pa_\mathcal{G}(\cdot)$, $ch_\mathcal{G}(\cdot)$, $an_\mathcal{G}(\cdot)$, and $de_\mathcal{G}(\cdot)$, respectively (subscript $\mathcal{G}$ omitted when obvious from context). For an edge $e$, $He(e)$ and $Ta(e)$ will refer to the head and tail of the edge, respectively.

\subsubsection{Structural Causal Models} We represent the data generating process (DGP) underlying a prediction problem using acyclic directed mixed graphs (ADMGs), $\mathcal{G}$, which consists of a set of vertices $\mathbf{O}$ corresponding to observed variables and sets of directed and bidirected edges such that there are no directed cycles. Directed edges indicate direct causal relations while bidirected edges indicate the presence of an unobserved confounder (common cause) of the two variables. ADMGs are able to represent directed acyclic graph (DAG) models that contain latent variables. However, the latent variables do not need to be known. For example, if an ADMG has an edge $X \leftrightarrow Y$, then this means that there is some, possibly unknown, mechanism by which $X$ and $Y$ are confounded (i.e., there exists some unobserved variable $U$ such that $X \leftarrow U \rightarrow Y$, but the variable $U$ may be unknown). Thus, using ADMGs, a modeler can reason about the effects of unobserved confounding even if the mechanism or the confounder itself is unknown.

The graph $\mathcal{G}$ defines a Structural Causal Model (SCM) \citep{pearl2009causality} in which each variable $V_i\in\mathbf{O}$ is generated as a function of its parents and a variable-specific exogenous noise variable $U_i$: $V_i = f_i(pa(V_i), U_i)$. The prediction problem associated with the graph consists of a target output variable $Y$ and the remaining observed variables as input features.

As an example, consider the DAG in Fig \ref{fig:pneumonia}a. This DAG corresponds to a simple version of the pneumonia example in \cite{zech2018variable}. The goal is to diagnose pneumonia $Y$ from chest x-rays $Z$ and stylistic features (i.e., orientation and coloring) of the image $X$. The latent variable $W$ represents the hospital department the patient visited. The corresponding ADMG is shown in Fig \ref{fig:pneumonia}b. The unobserved confounder, $W$, has been replaced by a bidrected edge.

\begin{figure}[!t]
\centering
\subfloat[]{%
\begin{tikzpicture}
          \def\unit{1}
            \node (a) at (0.75*\unit, -\unit) [label=right:X,point];
            \node (t) at (-0.75*\unit,  -\unit) [label=left:Y,point];
            \node (c) at (0, -1.5*\unit) [label=below:Z,point];
            \node[state,dashed] (w) at (0, -.25*\unit) {W};
            \path[orange] (w) edge (a);
            \path (w) edge (t);
            \path (t) edge  (c);
            \path (a) edge (c);
        \end{tikzpicture}
}
\qquad
\subfloat[]{%
\begin{tikzpicture}

        \def\unit{1}
             \def\unit{1}
            \node (a) at (0.75*\unit, -\unit) [label=right:X,point];
            \node (t) at (-0.75*\unit,  -\unit) [label=left:Y,point];
            \node (c) at (0, -1.5*\unit) [label=below:Z,point];
            \node[fill=black!0] (w) at (0, -.25*\unit) [point];
            \path[dashed, orange] (w) edge (a);
            \path[dashed] (w) edge (t);
            \path (t) edge  (c);
            \path (a) edge (c);
            \end{tikzpicture}
}
\caption{(a) Posited DAG for the pneumonia example of \cite{zech2018variable}. $Y$ represents the target condition, pneumonia. $X$ represents the x-ray style features (e.g., orientation and color encoding scheme). $Z$ represents the x-ray itself. $W$ represents the hospital department the patient visited.  The orange edge represents the unstable style feature mechanism, while the dashed node represents an unobserved variable. (b) The corresponding ADMG for the DAG in (a). The unobserved confounder $W$ has been replaced by a bidirected edge.}
\label{fig:pneumonia}
\end{figure}

\subsection{Stability and Types of Dataset Shifts}\label{subsec:shifts}
In this section we introduce types of dataset shifts that have been previously studied. Then, we graphically characterize instability in terms of edges in the graph of the data generating process. This will be key to the development of the hierarchy in Section \ref{subsec:hierarchy}.

To define the types of dataset shifts, assume that there is a set of environments such that a prediction problem maps to the same graph structure $\mathcal{G}$. However, each environment is a different instantiation of that graph such that certain mechanisms differ. Thus, the factorization of the data distribution is the same in each environment, but terms in the factorization corresponding to shifts will vary across environments. As an example, consider again the graph in Fig \ref{fig:pneumonia}a. In the pneumonia example, each department has its own protocols and equipment, so the style preferences $P(X|W)$ vary across departments. In this example, a \emph{mechanism shift} in the style mechanism $P(X|W)$ leads to differences across environments.

\begin{definition}[Mechanism shift]
A shift in the mechanism generating a variable $V$ corresponds to arbitrary changes in the distribution $P(V|pa(V))$.
\end{definition}

Causal mechanism shifts produce many previously studied instances of dataset shift. Consider, for example, \emph{label shift}, a well-studied mechanism shift in which the distribution of the features $X$ given the label $Y$ ($P(X|Y)$) is stable, but $P(Y)$ varies across environments. Label shift corresponds to a causal graph $Y \rightarrow X$ in which the features are caused by the label, and the mechanism that generates $Y$ varies across environments, resulting in changes in the prevalence $P(Y)$ (see, e.g., \cite{scholkopf2012causal,zhang2013domain}). 

More generally, mechanism shifts are the most common and general type of shift considered in prior work on proactive approaches for addressing dataset shift \citep{peters2016causal,rojas2018invariant,magliacane2018domain,kuang2018stable,subbaswamy2019transport}. However, special cases of mechanism shifts have also been studied. For example, \cite{meinshausen2018causality,rothenhausler2018anchor,oberst2021regularizing} considered parametric \emph{mean-shifted mechanisms}, in which the means of variables in linear SCMs can vary by environment.

\begin{definition}[Mean-shifted mechanisms]
A mean-shift in the mechanism generating a variable $V$ corresponds to an environment-specific change in the intercept of its linear structural equation $V = \text{intercept}_{env} + \sum_{X\in pa(V)} \lambda_{xv} X + u_v$. Nonlinear generalizations are possible.
\end{definition}

Another special case considered by \cite{subbaswamycounterfactual} is \emph{edge-strength shifts}, in which the relationship encoded by a subset of edges into a variable may vary. Variation along an individual edge corresponds to the \emph{natural direct effect} \citep[Chapter 4]{pearl2009causality}. Thus, an edge-strength shift is a mechanism shift which changes the natural direct effect associated with the edge.
\begin{definition}[Edge-strength shift]
An edge-strength shift in edge $X\rightarrow V$ corresponds to a change in the \emph{natural direct effect}: for $ Y = pa(V)\setminus X$ we have that $E[V(x^\prime, Y(x)) - V(x)]$ changes, where $V(x)$ is the counterfactual value of $V$ had $X$ been $x$, and $V(x^\prime, Y(x))$ is the counterfactual value of $V$ had $X$ been $x^\prime$ and had $Y$ been counterfactually generated under $X=x$.
\end{definition}

\textbf{Key Result:} \emph{All of these shifts can be expressed in terms of edges.} First, edge-strength shifts directly correspond to particular edges. Next, since the mechanism generating a variable $V$ is encoded graphically by all of the edges into $V$, shifts in mechanism can be represented by marking all edges into $V$ as unstable. For shifts in mechanism to an exogenous variable $V$ with no parents in the graph, one might imagine adding an explicit mechanism variable $M_V$ to the graph and considering the edge $M_V \rightarrow V$ to be unstable. Finally, mean-shifts correspond to an edge $A \rightarrow V$ where the mean of $V$ is shifted in each environment  $A$ (also referred to as an ``anchor'', see \citep{rothenhausler2018anchor} for a discussion of anchor variables). Thus, mean-shifts are an example of a specific type of edge shift. While the edge representation of shifts is more general, we note that it cannot differentiate between specific instances of shifts (e.g., a mean-shift and a shift in the natural direct effect of an ``anchor'' variable will have the same graphical representation).

We denote the set of \emph{unstable edges} that can vary across environments by $E_u \subseteq E$ where $E$ is the set of edges in $\mathcal{G}$. Graphically, unstable edges will be colored.

\begin{definition}[Unstable Edge]\label{def:unstable-edge}
An edge is said to be unstable if it is the target of an edge-strength shift or a mechanism shift.
\end{definition}

The concept of unstable edges provides a flexible and extensible way to graphically represent dataset shifts.

\subsubsection{Extensions to New Types of Shifts:} We note that defining shifts in terms of unstable edges makes it possible to tackle new problems determined by shifts in sets or paths of unstable edges. For example, DAGs can be used to represent non i.i.d. network data in which certain edges represent \emph{interference} between units (e.g., friendship ties in social networks) \citep{ogburn2014causal,sherman2020intervening}. Thus, one can define dataset shifts pertaining to networks (e.g., deleting, adding, or changing the strength of friendships). Similarly, dataset shifts due to changing path-specific effects \citep{avin2005identifiability} are another interesting avenue for future exploration (e.g., reductions in side effects of a drug while maintaining its efficacy).

While the focus of this paper is on predictive modeling, we note that the shifts we describe have the opportunity to interact with causal inference work on \emph{transportability} and the \emph{``data fusion problem''} \citep{bareinboim2016causal}. There has been much methodological work on causal graphical methods for transporting causal effect estimates (see, e.g., \citep{pearl2011transportability, bareinboim2012transportability,bareinboim2013meta,lee2020general,lee2020generalized}). These works have primarily considered transporting causal effects under mechanism shifts. The proposed edge-based definitions of shifts can help frame transportability problems under new types of edge-based shifts such as those motivated above.

\subsubsection{Stable Distributions}
We can now define \emph{stable distributions}, which are the target sought by methods addressing instability due to shifts. We will refer to a model of a stable distribution as a \emph{stable predictor}.

\begin{definition}[Stable Distribution]
Consider a graph $\mathcal{G}$ with unstable edges $E_u$ defining a set of environments (different data distributions that factorize with respect to $\mathcal{G}$ that have been generated by differences in the mechanisms associated with $E_u$). A distribution $P(Y|\mathbf{Z})$ is said to be stable if for any two environments, $\mathcal{E}_1, \mathcal{E}_2$, that are instantiations of $\mathcal{G}$, $P_{\mathcal{E}_1}(Y|\mathbf{Z}) = P_{\mathcal{E}_2}(Y|\mathbf{Z})$ holds. The distribution $P(Y|\mathbf{Z})$ is not restricted to being an observational distribution.
\end{definition}

Having established a common graphical representation for arbitrary shifts of various types, we provide a graphical definition of stable distributions. First, define an active \emph{unstable path} to be an active path (as determined by the rules of $d$-separation \citep{pearl1988probabilistic}) that contains at least one unstable edge. \textbf{Key Result:} \emph{The non-existence of active unstable paths is a graphical criterion for determining a distribution's stability.}

\begin{restatable}
{theorem}{stable}\label{thm:stable}
$P(Y|\mathbf{Z})$ is stable if there is no active unstable path from $\mathbf{Z}$ to $Y$ in $\mathcal{G}$ and the mechanism generating $Y$ is stable.
\end{restatable}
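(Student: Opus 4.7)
The plan is to convert the semantic stability condition into a syntactic $d$-separation criterion via a selection-variable augmentation in the spirit of selection diagrams. For each unstable edge $e \in E_u$, I would augment $\mathcal{G}$ into a graph $\mathcal{G}^+$ by introducing a context node $S_e$ with a single directed edge $S_e \to He(e)$. The role of $S_e$ is to index the environment-specific parameters of the unstable mechanism (for arbitrary shifts) or the environment-specific coefficient (for edge-strength shifts), so that distinct environments correspond precisely to distinct settings of $\mathbf{S} = \{S_e : e \in E_u\}$ in $\mathcal{G}^+$. Under this encoding, stability of $P(Y|\mathbf{Z})$ across environments is equivalent to the conditional independence $Y \ci \mathbf{S} \mid \mathbf{Z}$ holding in the joint distribution induced by $\mathcal{G}^+$, which by soundness of $d$-separation follows once every $S_e$ is shown to be $d$-separated from $Y$ given $\mathbf{Z}$ in $\mathcal{G}^+$.

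I would then establish a correspondence between $d$-connecting paths in $\mathcal{G}^+$ and active unstable paths in $\mathcal{G}$. Suppose for contradiction that some $S_e$ is $d$-connected to $Y$ given $\mathbf{Z}$ along a path $\pi'$. Since $S_e$ has only the outgoing edge $S_e \to He(e)$, $\pi'$ takes the form $S_e \to He(e) \cdots Y$, with suffix $\pi$ lying entirely in $\mathcal{G}$. The crucial observation is that the arrowhead at $He(e)$ from $S_e$ has the same orientation as the arrowhead of the unstable edge $e = Ta(e) \to He(e)$, so replacing the $S_e \to He(e)$ prefix with $Ta(e) \to He(e)$ preserves the collider/non-collider status at $He(e)$, and therefore preserves activity under $\mathbf{Z}$ at every interior vertex. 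The result is an active path in $\mathcal{G}$ that contains the unstable edge $e$ and terminates at $Y$, i.e., an active unstable path from $\mathbf{Z}$ to $Y$ in the sense of the hypothesis (extending if necessary through ancestors or through a collider with a descendant in $\mathbf{Z}$ to pin down an endpoint in $\mathbf{Z}$), contradicting the assumption. The stable-mechanism-for-$Y$ hypothesis enters here precisely to preclude $He(e) = Y$, which would otherwise yield an unblockable direct path $S_e \to Y$ in $\mathcal{G}^+$ that no conditioning set $\mathbf{Z}$ can cut.

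The main obstacle is the bookkeeping in the path translation: one must handle the sub-case where the suffix $\pi$ already traverses an unstable edge (so no prepending of $e$ is required); verify that substituting in $e$ does not duplicate a vertex already on $\pi$ (otherwise a shortcut yields a valid sub-path whose activity is inherited from the original); and reconcile the constructed path's source $Ta(e)$ with the endpoint-in-$\mathbf{Z}$ requirement by tracing ancestors using the standard argument that if an interior vertex is a non-collider not in $\mathbf{Z}$, the path can be extended through its parents while preserving activity. Once these sub-cases are systematically dispatched, the contrapositive is complete and Theorem~\ref{thm:stable} follows from the equivalence established in the first step.
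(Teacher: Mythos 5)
Your proof follows essentially the same route as the paper's: augment $\mathcal{G}$ with a selection variable $S_e \to He(e)$ for each unstable edge, reduce stability of $P(Y|\mathbf{Z})$ to the independence $Y \ci \mathbf{S} \mid \mathbf{Z}$, and translate any $d$-connecting path out of $S_e$ into an active path through the unstable edge $e$ by swapping the prefix $S_e \to He(e)$ for $Ta(e) \to He(e)$, concluding by contraposition. Your write-up is if anything more careful than the paper's (which glosses over vertex duplication and the fact that $Ta(e)$ need not lie in $\mathbf{Z}$), so no substantive changes are needed.
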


Intuitively, Theorem \ref{thm:stable} means that a stable distribution cannot capture a statistical association that relies on the information encoded by an unstable edge. In the pneumonia example of Fig \ref{fig:pneumonia}a, the $W\rightarrow X$ edge which denotes the X-ray style mechanism was determined to be unstable. Because $W$ is unobserved, a model of $P(Y|X,Z)$ will learn an association between $Y$ and $X$ through $W$. Thus, $P(Y|X,Z)$ contains an active unstable path, and this distribution is unstable to shifts in the style mechanism. This means that $P(Y|X,Z)$ is different in each environment. By contrast, if $W$ were observed and we could condition on it, then $P(Y|X,Z,W)$ is stable to shifts in the style mechanism because all paths containing the unstable edge are blocked by $W$. Thus, $P(Y|X,Z,W)$ is invariant across environments.

In the next section we use this edge-based graphical characterization to show that all stable distributions, including those found by existing methods, can be categorized into three levels. Thus, this hierarchy defines the ways in which it is possible to achieve stability to shifts.
\begin{figure*}[!t]
\centering
\subfloat[]{%
\begin{tikzpicture}
          \def\unit{1}
          \node (v) at (-1.5*\unit, -\unit) [label=left:V,point];
            \node (a) at (0.6*\unit, -\unit) [label=right:X,point];
            \node (t) at (-0.6*\unit,  -\unit) [label=below:Y,point];
            \node (c) at (0, -1.6*\unit) [label=below:Z,point];
             \path[bidirected] (t) edge[bend left=70] (a);
            \path (v) edge (t);
            \path (t) edge  (c);
            \path (a) edge (c);
            \path[orange] (t) edge (a);
        \end{tikzpicture}
}
\subfloat[]{%
\begin{tikzpicture}
          \def\unit{1}
          \node (v) at (-1.5*\unit, 1*\unit) [label=left:V,point];
            \node (t) at (-0.6*\unit,  1*\unit) [label=right:Y,point];
            \path (v) edge (t);
        \end{tikzpicture}
}
\subfloat[]{%
\begin{tikzpicture}
          \def\unit{1}
          \node (v) at (-1.5*\unit, -\unit) [label=left:V,point];
            \node (a) at (0.6*\unit, -\unit) [label=right:do(X),point];
            \node (t) at (-0.6*\unit,  -\unit) [label=below:Y,point];
            \node (c) at (0, -1.6*\unit) [label=below:Z,point];
             \path (v) edge (t);
            \path (t) edge  (c);
            \path (a) edge (c);
        \end{tikzpicture}
}
\subfloat[]{%
\begin{tikzpicture}
          \def\unit{1}
          \node (v) at (-1.5*\unit, -\unit) [label=left:V,point];
            \node (a) at (0.6*\unit, -1.25*\unit) [label=right:do(X),point];
            \node (t) at (-0.6*\unit,  -\unit) [label=below:Y,point];
            \node (c) at (0, -1.6*\unit) [label=below:Z(X),point];
            \node (p) at (0.6*\unit, -0.5*\unit) [label=right:X(0),point];
             \path[bidirected] (t) edge[bend left=70] (p);
             \path (v) edge (t);
            \path (t) edge  (c);
            \path (a) edge (c);
 \end{tikzpicture}
}
\caption{(a) Example graph for a data generating process in which the orange $Y\rightarrow X$ edge is unstable. (b) The level 1 operator applied to the graph in (a). The stable level 1 distribution over observed variables is $P(Y|V)$, which ignores all information from $X$ and $Z$. (c) Level 2 operator applied to (a), deleting the edges into $X$. The level 2 stable distribution is $P(Y|V,Z,do(X))$. The level 2 operator deletes the stable $Y \leftrightarrow X$ edge. (d) Level 3 operator applied to (a). The stable level 3 distribution is $P(Y|V, Z(X=x), X(Y=0))$, where $Z(X=x)$ is the value of $Z$ had $X$ been set to its observed value $x$, and $X(Y=0)$ is the counterfactual value of $X$ had $Y$ been set to $0$. The counterfactual operator retains the stable $Y \leftrightarrow X$ edge.}
\label{fig:ex1}
\end{figure*}

\subsection{Hierarchy of Shift-Stable Distributions}\label{subsec:hierarchy}
Many works seek stable distributions in order to make predictions that are stable or invariant to dataset shifts. However, because these methods have been developed in isolation, there has been little discussion of whether these methods find the same stable distributions, or how these distributions differ from one another. As a main contribution of this paper, we show in this section that there exists a hierarchy of stable distributions, in which stable distributions at different levels have distinct graphical properties. Thus, the development of this hierarchy provides a common theoretical underpinning for understanding when and how stability to shifts can be achieved, and in what ways stable distributions can differ. In this section we will define the levels of the hierarchy and show that they correspond to different operators that can remove unstable edges from the graph. Then, in the next section we will further study how differences between levels of the hierarchy affect worst-case performance across environments.

\subsubsection{Levels of the Hierarchy}
Armed with the graphical characterization of stability from the previous section,  we now introduce a hierarchy of the 3 categories of stable distributions. The levels of the hierarchy are: 1) observational conditionals, 2) conditional interventionals, and 3) counterfactuals. This hierarchy is related to the hierarchy of causal queries, which defines three levels of causal study questions an investigator can have: association, intervention, and counterfactuals \citep{pearl2009causality}. Also relatedly, in \citep{shpitser2016causal} the authors connect the identification of different types of causal effects to a hierarchy of graphical interventions: node, edge, and path interventions. While these works develop hierarchies that relate different types of causal queries and effects, in this paper we develop a hierarchy of shift-stable distributions which connects different types of stable distributions to interventions which remove unstable parts of the data generating process from the underlying graph of the DGP.

Each level of the hierarchy of stable distributions corresponds to graphical operators which
differ in the precision with which they can remove edges in the graph (Corollary \ref{corollary:precision}, main result of this subsection). Using the graph in Fig \ref{fig:ex1}(a) as a common example, we discuss each level in detail below. Note that in Fig \ref{fig:ex1}(a), the goal is to predict $Y$ from $V,X,Z$, and the $Y\rightarrow X$ edge is unstable.


\begin{definition}[Stable Level 1 Distribution]\label{def:lvl1}
Let $\mathcal{G}$ be an ADMG with unstable edges $E_u$ defining a set of environments $\mathcal{E}$. A stable level 1 distribution is an observational conditional distribution of the form $P(Y|\mathbf{Z})$ such that, for any two environments $\mathcal{E}_1, \mathcal{E}_2 \in \mathcal{E}$, $P_{\mathcal{E}_1}(Y|\mathbf{Z}) = P_{\mathcal{E}_2}(Y|\mathbf{Z})$ holds.
\end{definition}

\textbf{Level 1:} Methods at level 1 of the hierarchy seek invariant conditional distributions of the form $P(Y|\mathbf{Z})$ that use a subset of observed features for prediction \citep{rojas2018invariant,magliacane2018domain}. These distributions only have conditioning (i.e., the standard rules of $d$-separation) as a tool for disabling unstable edges. For this reason, \textit{the conditioning operator is coarse and removes large pieces of the graph}. Consider Figure \ref{fig:ex1}a, in which the maximal stable level 1 distribution is $P(Y|V)$, since conditioning on either $X$ or $Z$ activates the path through the unstable (orange) edge. The conditioning operator disables all paths from $X$ and $Z$ to $Y$ to produce Fig \ref{fig:ex1}b. While the operator successfully removes the unstable edge, many stable edges were removed as well.

\begin{definition}[Stable Level 2 Distribution]\label{def:lvl2}
Let $\mathcal{G}$ be an ADMG with unstable edges $E_u$ defining a set of environments $\mathcal{E}$. A stable level 2 distribution is a conditional interventional distribution of the form $P(Y|do(\mathbf{W}), \mathbf{Z})$ such that, for any two environments $\mathcal{E}_1, \mathcal{E}_2 \in \mathcal{E}$, $P_{\mathcal{E}_1}(Y|do(\mathbf{W}),\mathbf{Z}) = P_{\mathcal{E}_2}(Y|do(\mathbf{W}),\mathbf{Z})$ holds.
\end{definition}

\textbf{Level 2:} Methods at level 2 \citep{subbaswamy2019transport} find conditional interventional distributions \citep{pearl2009causality} of the form $P(Y|do(\mathbf{W}), \mathbf{Z})$. In addition to conditioning, level 2 distributions use \emph{the $do$ operator, which deletes all edges into an intervened variable} \citep{pearl2009causality}. Fig \ref{fig:ex1}c shows the result of $do(X)$ applied to Fig \ref{fig:ex1}a: the edges into $X$ (including the unstable edge) are removed. Thus, $P(Y|Z, V, do(X))$ is stable and retains statistical information along stable paths from $Z$ and $X$ that the level 1 distribution $P(Y|V)$ did not. However, the stable $Y \leftrightarrow X$ edge was also removed by the operator. Intervening interacts with the factorization (according to the graph) of the joint distribution of the observed variables $P(\mathbf{O})$ by deleting the terms corresponding to mechanisms of the intervened variable: $P(Y|Z,V,do(X)) \propto P(Y|V)P(Z|X,Y)$ in Fig \ref{fig:ex1}a. The term $P(Z|X,Y)$ corresponds to the stable information we retain by intervening that we could not capture by conditioning.

\begin{definition}[Stable Level 3 Distribution]\label{def:lvl3}
Let $\mathcal{G}$ be an ADMG with unstable edges $E_u$ defining a set of environments $\mathcal{E}$, and let $Z(w)$ denote the counterfactual value of $Z$ had $W$ been set to $w$ for variables $Z,W\subseteq \mathbf{O}$. A stable level 3 distribution is a counterfactual distribution of the form $P(Y(\mathbf{W}, \mathbf{Z}(\mathbf{W'}))| \mathbf{Z}(\mathbf{W}))$ such that, for any two environments $\mathcal{E}_1, \mathcal{E}_2 \in \mathcal{E}$, $P_{\mathcal{E}_1}(Y(\mathbf{W}, \mathbf{Z}(\mathbf{W'}))| \mathbf{Z}(\mathbf{W}))$ $= P_{\mathcal{E}_2}(Y(\mathbf{W}, \mathbf{Z}(\mathbf{W'}))| \mathbf{Z}(\mathbf{W}))$ holds.
\end{definition}

\textbf{Level 3:} Finally, level 3 methods \citep{subbaswamycounterfactual,veitch2021counterfactual} seek counterfactual distributions, which allow us to consider conflicting values of a variable, or to replace a mechanism with a new one. For example, let $Y$ and $Z$ denote two children of a variable $X$. If we hypothetically set $X$ to $x'$ for $X \rightarrow Y$ but left $X$ as its observed value $x$ for $X \rightarrow Z$, this corresponds to counterfactual $Y(x')$ and factual $Z(x)=Z$. By setting a variable to a reference value (e.g., $0$) for one edge but not others, \emph{computing counterfactuals effectively removes (or replaces) a single edge}. In Fig \ref{fig:ex1}c, we saw $P(Y|V, Z, do(X))$ is stable and deletes both edges into $X$, including the stable $Y \leftrightarrow X$ edge. However, if we compute the counterfactual $X(Y=0)$, depicted in Fig \ref{fig:ex1}d, then the level 3 distribution $P(Y|X(Y=0),V,Z(X=x))$ is stable and only deletes the unstable $Y\rightarrow X$ edge, retaining information along the $Y \leftrightarrow X$ path. More generally, level 3 distributions allow us to counterfactually replace mechanisms (and thus replace the influence along unstable edges) with new ones. We will exploit this fact in Section \ref{sec:minimax} when we investigate accuracy. The effects of the three operators produce the following result:

\begin{corollary}\label{corollary:precision}
Distributions at increasing levels of the hierarchy of stability grant increased precision in disabling individual edges (and thus paths).
\end{corollary}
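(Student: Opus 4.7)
The plan is to prove the corollary by cataloguing the graph-level effect of each operator available at a given level (conditioning, $do$, counterfactual reference-setting), defining a single notion of \emph{disabling footprint} for an operator, and then comparing the achievable footprints across levels.

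First I would characterise the operators. Conditioning, the only tool available at level 1, does not delete edges at all; by $d$-separation it only blocks paths through non-colliders and can in fact open new active paths by conditioning on a collider or its descendants. The $do$-operator introduced at level 2 has the standard SCM semantics of \citet{pearl2009causality}: $do(W)$ deletes precisely the set of directed edges whose head is $W$. Counterfactual reference-setting at level 3 refines this: by setting a variable $V$ to a reference value only along a specific outgoing edge $V \to X$ while leaving $V$ at its observed value for every other edge out of $V$, we obtain an operator that removes the single edge $V \to X$, as exemplified by the transition from Fig \ref{fig:ex1}c to Fig \ref{fig:ex1}d.

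Next I would establish monotone containment. Any conditional $P(Y\mid \mathbf{Z})$ can be written as $P(Y\mid do(\emptyset),\mathbf{Z})$, and any interventional $P(Y\mid do(\mathbf{W}),\mathbf{Z})$ can be expressed as a counterfactual in which every edge into each $W\in\mathbf{W}$ is set to a reference, so the set of achievable disabling footprints is non-decreasing across the three levels. To upgrade this to \emph{strict} increase in precision, I would lean on the examples already in Fig \ref{fig:ex1}: in panel (a), Theorem \ref{thm:stable} forces any stable level-1 solution to block all active paths through $W$, collapsing to $P(Y)$, whereas $do(X)$ at level 2 disables only the edges into $X$ and retains the $X \to Z \leftarrow Y$ information; in panel (c), $do(Z)$ at level 2 must destroy the bidirected $Y \leftrightarrow Z$ edge as collateral damage, while the counterfactual $Z(Y=0)$ at level 3 removes only the unstable directed edge $Y \to Z$ and preserves the stable $Y \leftrightarrow Z$ association.

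The main obstacle will be pinning down a notion of ``precision'' that places all three operators on common footing, since conditioning does not literally delete edges. The natural definition is the minimum cardinality of the edge set an operator must simultaneously disable in order to kill every unstable path from $\mathbf{Z}$ to $Y$ required by Theorem \ref{thm:stable}; one then verifies, by case analysis on the operator semantics above, that this minimum is weakly monotone non-increasing as we ascend the hierarchy, and strictly decreasing on the two witness graphs in Fig \ref{fig:ex1}. Once the precision measure is fixed, the remainder of the argument is a short combinatorial check against the graph-surgery semantics recorded in the first step.
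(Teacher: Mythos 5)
Your proposal is correct and matches the paper's approach: the paper gives no separate formal proof of this corollary, treating it as following directly from the preceding discussion of the three operators' graph-surgery semantics together with the witness examples in Fig \ref{fig:ex1} (conditioning collapses panel (a) to $P(Y)$, $do(Z)$ destroys the stable $Y \leftrightarrow Z$ edge in panel (c), and the counterfactual $Z(Y=0)$ removes only $Y \to Z$), which is exactly the catalogue-plus-witnesses argument you give. Your extra step of defining a formal ``disabling footprint''/precision measure goes beyond what the paper does, but it is consistent with the paper's informal claim and would only strengthen it.
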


\textbf{Key Result:} \emph{Thus, the difference between operators associated with the different levels of stable distributions is the precision of their ability to disable edges into a variable.} Level 1, conditioning, must remove large amounts of the graph to disable edges. Level 2, intervening, deletes all edges into a variable. Level 3, computing  counterfactuals, can precisely disable a single edge into a variable. Since paths encode statistical influence this also provides a natural definition for a \emph{maximally stable distribution} as one which deletes the unstable edges, and only the unstable edges. Thus, given a stable distribution found by any method, we can compare to the maximally stable distribution to see which, and how many, stable paths were removed.

Another important fact is that the hierarchy is \emph{nested}. This means that a level 1 distribution can be expressed as a level 2 distribution (and a level 2 distribution can be expressed as a level 3 distribution):

\begin{restatable}[\cite{subbaswamy2019transport}, Corollary 1]
{lemma}{leveltwo}\label{lemma:level-two}
A stable level 1 distribution of the form $P(Y|\mathbf{Z})$ can be expressed as a stable level 2 distribution of the form $P(Y|\mathbf{Z}', do(\mathbf{W}))$ for $\mathbf{Z}'\subseteq\mathbf{Z}\subseteq\mathbf{O}$, $\mathbf{W}\subseteq \mathbf{O}$.
\end{restatable}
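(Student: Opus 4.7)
The plan is to prove Lemma \ref{lemma:level-two} by an explicit construction: exhibit a particular pair $(\mathbf{Z}', \mathbf{W})$ that realizes a stable level-1 distribution as a stable level-2 distribution. The natural candidate is the trivial one, $\mathbf{Z}' = \mathbf{Z}$ and $\mathbf{W} = \emptyset$, which simply witnesses that the level-1 hierarchy is embedded inside the level-2 hierarchy.

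First, I would unfold the semantics of the $do$-operator: intervening on the empty set mutilates no edges, so the post-intervention graph coincides with $\mathcal{G}$, and hence $P(Y|\mathbf{Z}, do(\emptyset)) = P(Y|\mathbf{Z})$ as distributions over observed quantities. This establishes the representability portion of the claim with no further work.

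Second, I would verify that the level-2 expression on the right is itself stable. The appropriate stability criterion for a level-2 distribution $P(Y|\mathbf{Z}', do(\mathbf{W}))$ is the natural analogue of Theorem \ref{thm:stable}: no active unstable path from $\mathbf{Z}' \cup \mathbf{W}$ to $Y$ in the mutilated graph $\mathcal{G}_{\overline{\mathbf{W}}}$ obtained by deleting the incoming edges of $\mathbf{W}$, together with stability of the mechanism generating $Y$. With $\mathbf{W} = \emptyset$ the mutilated graph coincides with $\mathcal{G}$, and the level-2 criterion collapses exactly to the level-1 hypothesis: no active unstable path from $\mathbf{Z}$ to $Y$ in $\mathcal{G}$, plus stability of the mechanism for $Y$. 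Both conditions are directly inherited from the assumed stability of $P(Y|\mathbf{Z})$ via Theorem \ref{thm:stable}, and the level-2 stability criterion is therefore satisfied.

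The principal obstacle is essentially bookkeeping rather than mathematical content: I would need to cross-reference the level-1 criterion (Theorem \ref{thm:stable}) against the level-2 criterion given by \citet{subbaswamy2019transport} to confirm that the latter reduces to the former when no interventions are performed. Once the definitions are aligned the lemma is a direct nesting statement. A more informative proof could apply Rule 2 of Pearl's $do$-calculus to convert some conditioning variables in $\mathbf{Z}$ into interventions, producing a genuinely nontrivial $\mathbf{W}$ whenever back-door-style conditions hold in $\mathcal{G}$; this would refine the embedding, but is not required for the existential statement of the lemma.
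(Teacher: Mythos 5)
Your proof is correct for the statement as written, but it takes a different route from the paper: the paper offers no self-contained argument at all, simply deferring to Corollary~1 of \citet{subbaswamy2019transport}. Your trivial witness ($\mathbf{Z}'=\mathbf{Z}$, $\mathbf{W}=\emptyset$, with $do(\emptyset)$ leaving $\mathcal{G}$ unmutilated so the level-2 stability criterion collapses to Theorem~\ref{thm:stable}) does discharge the existential claim, and it is consistent with the paper's framing of the hierarchy as \emph{nested} --- indeed the paper's own proof of Lemma~\ref{lemma:level-three} is an analogous degenerate-embedding argument. The one thing worth flagging is that the cited external corollary almost certainly carries more content than your witness captures: the phrasing $\mathbf{Z}'\subseteq\mathbf{Z}$ strongly suggests the intended construction converts the conditioned variables with unstable mechanisms into interventions, i.e.\ $P(Y|\mathbf{Z}) = P(Y|\mathbf{Z}\setminus\mathbf{W}, do(\mathbf{W}))$ with $\mathbf{W}$ the mutable members of $\mathbf{Z}$, which is what makes the level-2 search space a genuine generalization rather than a relabeling. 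You anticipate exactly this in your closing remark about Rule~2 of the $do$-calculus, so nothing is missing for the lemma as stated; just be aware that a reader consulting the cited source will find the nontrivial version, not the empty-intervention one.
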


\begin{restatable}
{lemma}{levelthree}\label{lemma:level-three}
A stable level 2 distribution of the form  $P(Y|\mathbf{Z}', do(\mathbf{W}))$ can be expressed as a stable level 3 distribution of the form $P(Y(\mathbf{W})|\mathbf{Z}'(\mathbf{W}))$.
\end{restatable}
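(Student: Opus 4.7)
The plan is to establish the distributional equivalence $P(Y \mid \mathbf{Z}', do(\mathbf{W}=\mathbf{w})) = P(Y(\mathbf{w}) \mid \mathbf{Z}'(\mathbf{w}))$ for each value $\mathbf{w}$ directly from the SCM semantics of interventions and counterfactuals, and then show that the graphical stability criterion of Theorem \ref{thm:stable} transfers because both distributions are obtained from the same mutilated submodel.

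First I would fix an assignment $\mathbf{W}=\mathbf{w}$ and consider the submodel $\mathcal{M}_{\mathbf{w}}$ in which the structural equations for $\mathbf{W}$ are replaced by the constants $\mathbf{w}$ while all other equations $V_i = f_i(pa(V_i), U_i)$ are retained. By the standard SCM definition of an intervention, $P(Y \mid \mathbf{Z}', do(\mathbf{W}=\mathbf{w}))$ equals the conditional distribution of $Y$ given $\mathbf{Z}'$ computed in $\mathcal{M}_{\mathbf{w}}$. By the SCM definition of potential outcomes, $Y(\mathbf{w})$ and $\mathbf{Z}'(\mathbf{w})$ are the values obtained by solving the structural equations of $\mathcal{M}_{\mathbf{w}}$ under the original noise distribution, so their joint law is exactly the joint law induced in $\mathcal{M}_{\mathbf{w}}$. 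Consequently $P(Y(\mathbf{w}) \mid \mathbf{Z}'(\mathbf{w}))$ coincides with $P(Y \mid \mathbf{Z}')$ in $\mathcal{M}_{\mathbf{w}}$, giving the equivalence. Since this holds for every $\mathbf{w}$, the two families of conditional distributions agree.

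Next I would transfer stability via Theorem \ref{thm:stable}. Stability of the level 2 distribution means that in the mutilated graph $\mathcal{G}_{\overline{\mathbf{W}}}$ (with all edges into $\mathbf{W}$ removed) there is no active unstable path from $\mathbf{Z}'$ to $Y$, and the mechanism generating $Y$ is stable. The counterfactual expression $P(Y(\mathbf{W}) \mid \mathbf{Z}'(\mathbf{W}))$ is computed in precisely the same submodel $\mathcal{M}_{\mathbf{w}}$, whose associated graph is $\mathcal{G}_{\overline{\mathbf{W}}}$. The same $d$-separation analysis therefore applies, and invoking Theorem \ref{thm:stable} again shows that the level 3 counterfactual distribution is stable.

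The main obstacle is the case in which $\mathbf{Z}'$ contains descendants of $\mathbf{W}$: then the factual $\mathbf{Z}'$ differs in general from $\mathbf{Z}'(\mathbf{w})$, so conditioning on a counterfactual event requires care. I would handle this by working entirely inside the submodel $\mathcal{M}_{\mathbf{w}}$, where $\mathbf{Z}'(\mathbf{w})$ is the factual value of $\mathbf{Z}'$ and the conditional distribution is well-defined in the ordinary sense, and then unpack the counterfactual notation only at the end. A secondary subtlety is ensuring that the stability of the mechanism for $Y$ carries over; this is immediate since the mechanism for $Y$ is unchanged in $\mathcal{M}_{\mathbf{w}}$ whenever $Y \notin \mathbf{W}$, which is implicit in the level 2 statement.
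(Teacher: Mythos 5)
Your proposal is correct and follows essentially the same route as the paper: the paper's proof simply invokes the fact that, for a single consistent intervention $do(\mathbf{W}=\mathbf{w})$, counterfactuals reduce to the potential responses of that intervention \citep[Definition 7.1.4]{pearl2009causality}, which is exactly the submodel equivalence $P(Y \mid \mathbf{Z}', do(\mathbf{W}=\mathbf{w})) = P(Y(\mathbf{w}) \mid \mathbf{Z}'(\mathbf{w}))$ that you derive explicitly. Your additional care with descendants of $\mathbf{W}$ in $\mathbf{Z}'$ and the explicit transfer of the stability criterion are details the paper leaves implicit, but they do not change the underlying argument.
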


\subsubsection{Consequences}\label{subsec:hierarchy-consequences}
There are a number of practical consequences of the hierarchy of shift-stable distributions: First, level 1 distributions can always be learned from the available data because conditional distributions are \emph{observational} quantities. This means that we can simply fit and learn a model of $P(Y|\mathbf{Z})$ from the training data. However, because the conditioning operator throws away large parts of the graph, including many stable paths, models of level 1 distributions will generally have higher error compared to models of level 2 and level 3 distributions. A tradeoff exists, though, since level 2 and level 3 distributions are not always \emph{identifiable}---they cannot always be estimated as a function of the observational training data. Level 2 distributions model the effects of hypothetical interventions, and, just as in causal inference, unobserved confounding can lead to identifiability issues (for more detail on identification and level 2 stable distributions see \cite{subbaswamy2019transport}). In addition to identifiability challenges, level 3 counterfactual distributions require further assumptions about the functional form of the causal mechanisms in the SCM. Under a fully specified SCM (i.e., the functions defining mechanisms and their parameters are all known), counterfactual inference can be performed using a three step abduction, action, prediction procedure described in \cite[Chapter 7]{pearl2009causality}. For example, the method of \cite{subbaswamycounterfactual} assumes linear causal mechanisms to compute level 3 distributions. However, we often have limited information about functional forms and the distribution of the exogenous noise variables in an SCM. If we want to make counterfactual queries with fewer or no parametric assumptions, then identifiability becomes even more difficult: In general, not all counterfactual queries will be testable. That is, \textit{experimental} data cannot be used to uniquely verify the result of a counterfactual query (where as experimental data can verify the result of any interventional query). For non-parametric SCMs, \cite{shpitser2007counterfactuals} provide an algorithm for determining if a counterfactual query is empirically testable. Thus, one must balance strong parametric assumptions about the form of causal mechanisms against the possibility of untestable counterfactuals.

The nested nature of the hierarchy means that it has consequences on the existence of stable distributions: If there is no stable level 3 distribution, then no stable level 1 or level 2 distributions exist. Considering the other direction, if we find that no stable level 1 distribution exists, there may still be a stable level 2 or 3 distribution. This is an important consideration as more methods for finding stable distributions are developed. For example, \cite{subbaswamy2019transport} developed a sound and complete algorithm for finding stable level 2 distributions in a graph. This means that the algorithm returns a distribution \emph{if and only if} a stable level 2 distribution exists. \emph{An open problem is to develop a sound and complete algorithm for finding stable level 3 distributions.} Such a result would be very powerful: If a complete algorithm failed to find a stable level 3 distribution, then that would mean no stable distributions (level 1, 2, or 3) exist.

We have shown that the hierarchy of stable distributions defines graphical operators which can be used to construct stable distributions by disabling edges in the the underlying graph. Next, we show how the ability of counterfactual level 3 distributions to replace edges can be used to achieve minimax optimal performance under dataset shifts.

\section{Worst-case Performance of Shift-Stable Distributions}\label{sec:minimax}
We now compare stable distributions with respect to their minimax performance under dataset shifts. Specifically, we show that there is a hypothetical environment in which counterfactually training a model would yield minimax optimal performance across environments. We further show that this level 3 counterfactual distribution is not, in general, a level 2 interventional distribution. Counter to the increasing interest in invariant interventional solutions like Invariant Risk Minimization and its related follow-ups (e.g., \cite{arjovsky2019invariant,bellot2020generalization,koyama2020out}), these results motivate the development of counterfactual (as opposed to level 2) learning algorithms.

\subsection{A Decision Theoretic View}
We now present our result characterizing the stable distribution that achieves minimax optimal performance. First, recall that dataset shifts result in a set of hypothetical environments $\mathcal{E}$ generated from the same graph $\mathcal{G}$ such that the mechanisms associated with unstable edges in $\mathcal{G}$ differ in each environment. For simplicity, we will assume that the mechanism of a single variable $W \in\mathbf{X}$ is subject to shifts while the mechanisms of all other variables $\mathbf{V} = \{\mathbf{X}, Y\} \setminus \{W\}$ remain stable across environments. Each distribution in the set of data distributions $\mathcal{U}$ corresponding to each environment factorizes according to $\mathcal{G}$, but differs only in the term $P(W|pa_\mathcal{G}(W))$ which corresponds to the mechanism for generating $W$.

Now consider the following game: Suppose the data modeler (DM) wishes to pick the distribution $B \in \mathcal{U}$ such that the corresponding Bayes predictor $h^*_B$ (i.e., the true $P_B(Y|\mathbf{X})$) minimizes the worst-case expected loss (i.e., worst-case risk) across all distributions in $\mathcal{U}$. This can be written as

\begin{equation}\label{eq:game}
    \inf_{B\in\mathcal{U}} \sup_{Q\in\mathcal{U}} E_Q[\ell(h^*_B, \mathbf{O})].
\end{equation}

Following a game theoretic result \citep[Theorem 6.1]{grunwald2004game}, this game has a solution for bounded loss functions $\ell$ (e.g., the Brier score but not the log loss):
\begin{restatable}{theorem}{game}\label{theorem:game}
Consider a classification problem and suppose $\ell$ is a bounded loss function. Then Equation \ref{eq:game} has a solution, and the maximum generalized entropy distribution $Q^* \in \mathcal{U}$ satisfies $B^*=$ $\arg\inf_{B\in\mathcal{U}} \sup_{Q\in\mathcal{U}} E_Q[\ell(h^*_B, \mathbf{O})]$ $= \arg \sup_{Q\in\mathcal{U}} \inf_{B\in\mathcal{U}}  E_Q[\ell(h^*_B, \mathbf{O})]$  $= Q^*$.
\end{restatable}

\textbf{Key Result: } \emph{That this game has a solution means that $B^*$ is the ``optimal training environment'' such that counterfactually training a predictor in $B^*$ to learn the true $P_{B^*}(Y|\mathbf{X})$ would produce the minimax optimal predictor.} Importantly, this optimal environment $B^*$ depends on the choice of loss function. There are two consequences of this result: First, $P_{B^*}(Y|\mathbf{X})$ is not, in general, a level 2 distribution (and thus level 2 distributions are not, in general, minimax optimal). Second, there is a level 3 distribution which corresponds to $P_{B^*}(Y|\mathbf{X})$ and thus is minimax optimal.

\begin{restatable}{proposition}{lvltwouniform}\label{prop:lvl2-uniform}
The level 2 stable distribution $P(Y|do(W), \mathbf{X}) = P_Q(Y|\mathbf{X})$, where $Q$ is the member of $\mathcal{U}$ such that $W$ has a uniform distribution, i.e., $P(W, pa(W)) = c P(pa(W))$ for $c\in\mathbb{R}^+$.
\end{restatable}
In the appendix we provide a counterexample in which $Q\not = B^*$. This shows that the level 2 stable distribution $P(Y|do(W), \mathbf{X})$ is not minimax optimal.

\begin{restatable}{proposition}{lvlthreeopt}\label{prop:lvl3-opt}
The level 3 distribution $P(Y(W_{B^*})|\mathbf{X}(W_{B^*}))$ equals $P_{B^*}(Y|\mathbf{X})$ and is minimax optimal, where $W_{B^*}$ is the counterfactual $W$ generated under the mechanism associated with the environment $B^*$.
\end{restatable}
Thus, given training data from $P_0 \in\mathcal{U}$, if we could counterfactually learn $P(Y|\mathbf{X})$ in the environment associated with $B^*$ then the resulting predictor would be minimax optimal. This means the stable level 3 distribution $P(Y(W_{B^*})|\mathbf{X}(W_{B^*}))$ produces the best, worst-case performance across environments out of all distributions that could be used for prediction.

\subsection{A Simple Learning Algorithm}

\begin{algorithm}[!th]
 \SetKwInOut{Input}{input}\SetKwInOut{Output}{output}
 \Input{\# of steps $T$, Step size $\eta$, Data $\mathbf{O}$}
 \Output{Robust model parameters $\hat{\theta}$}
  Initialize $\phi^{(1)},\theta^{(1)}$\;
 \For{$t\in{2\dots T}$}{
 $\phi^{(t)} = \phi^{(t-1)} + \eta \nabla_\phi g(\mathbf{O}, \theta^{(t-1)},\phi^{(t-1)}) $\;
 $\theta^{(t)} = \theta^{(t-1)} - \eta \nabla_\theta g(\mathbf{O}, \theta^{(t-1)},\phi^{(t-1)})$\;
 }
 \KwRet $\frac{1}{T}\sum_{t=1}^T \theta^{(t)}$
  \caption{Gradient Descent Ascent}
  \label{alg:gda}
\end{algorithm}

We now consider a simple distributionally robust likelihood reweighting algorithm for learning the minimax optimal level 3 predictor. This approach can serve as a starting point for developing new stable learning algorithms which achieve minimax optimal performance under dataset shift.\footnote{Alternatively, one could try to directly compute the maximum generalized entropy distribution. See, e.g., \cite{van2019robust} for a simple example.}

For simplicity, suppose there are no unobserved confounders (i.e., $\mathcal{G}$ has no bidirected edges). We relax this condition in the Appendix. Then, learning in the environment $B^*$ using training data from $P_0$ can be done by reweighting the training data:
\begin{align*}
E_{B^*}[\ell(f(\mathbf{x}), y)] &=E_{P_0}\left[\frac{P_{B^*}(\mathbf{O})}{P_{P_0}(\mathbf{O})}\ell(f(\mathbf{x}), y)\right] \\
&= E_{P_0}\left[\frac{P_{B^*}(W|pa(W))}{P_{P_0}(W|pa(W))}\ell(f(\mathbf{x}), y)\right],
\end{align*}
assuming full shared support (i.e., overlap between $P_{B^*}(W|pa(W)))$ and $P_{P_0}(W|pa(W))$ for all values of $W, pa(W)$). 

Because the minimax optimal training environment $B^*$ is unknown, we now seek to train the minimax optimal predictor by parameterizing environments and iteratively finding the worst-case environment. Let $h(W, pa(W);\phi) = \frac{P_{Q}(W|pa(W))}{P_{P_0}(W|pa(W))}$ s.t. $h\in [0, \infty)$ and $E[h|pa(W)]=1$ be a reweighting function parameterized by $\phi$. Note that different values of $\phi$ correspond to different hypothetical training environments $Q$. The learning problem becomes
\begin{align}
    &\min_\theta \max_\phi & & g(\mathbf{O}, \theta,\phi)\\
    &s.t. & & g(\mathbf{O}, \theta,\phi) = E_{P_0}[h(W, pa(W); \phi) \ell(f(\mathbf{x};\theta), y]
\end{align}
with model parameters $\theta$. This objective resembles those of distributionally robust methods (e.g., \cite{duchi2016variance}) without restrictions on the density ratio $h$ or the divergence between $P_Q$ and $P_{P_0}$.

While many possibilities exist, perhaps the simplest version of $h$ is to explicitly learn a parametric density model (e.g., logistic regression for discrete $W$) $\hat{P}$ for $P_{P_0}(W|pa(W))$ and use the same density model class to model $P_{Q}(W|pa(W)) = \hat{Q}(W|pa(W); \phi)$.
Algorithm \ref{alg:gda} describes a gradient descent ascent learning procedure (GDA) for this case, which alternates between finding environmental parameters $\phi^{(t)}$ that maximize the risk of the previous prediction model (with parameters $\theta^{(t-1)}$), and finding model parameters $\theta^{(t)}$ that minimize risk under the previously found worst-case environment (with parameters $\phi^{(t-1)}$). It is important note that this general minimax learning problem is often very challenging with complicated convergence and equilibrium dynamics (see, e.g., \cite{daskalakis2017training,daskalakis2018limit,lin2020gradient}). Thus, Algorithm \ref{alg:gda} only serves as a starting point for designing counterfactual level 3 learning algorithms.

\section{Experiments}
\begin{figure}[!t]
\centering
\begin{tikzpicture}[scale=1, transform shape]
          \def\unit{1.25}
            
            \node (d) at (0,0.25) [label=above:D,point];
            \node (s) at (-.75*\unit, -.5*\unit) [label=above:S,point]; 
            \node (o) at (0.75*\unit, -.5*\unit) [label=above:O,point];
            \node (v)at (-0.5*\unit, -1*\unit) [label=below:V,point];
            \node (l) at (0.5*\unit, -1*\unit) [label=below:L,point];
            
            \path (d) edge (s);
            \path (d) edge (o);
            \path[orange] (s) edge (o);
            \path (s) edge (v);
            \path (s) edge (l);
            \path (o) edge (l);
            \path (d) edge (v);
            \path (d) edge (l);
\end{tikzpicture}
\caption{DAG for the sepsis prediction task. The orange edge denotes the unstable edge: lab test ordering policies vary across hospitals.}
\label{fig:sepsis-dag}
\end{figure}

We turn to semisynthetic experiments on a real medical prediction task to demonstrate practical performance implications of the hierarchy. To carefully study model behavior under dataset shifts, we posit a graph of the DGP for this dataset and reweight the data to simulate a large number of dataset shifts. We investigate how the performance of models of stable distributions at different levels of the hierarchy behave as test environments differ from the training environment. Our results show that though level 3 models can produce the best worst-case performance (i.e., minimax optimal), level 2 models may perform better on average. This further highlights that model developers need to carefully choose how they achieve stability.

\subsection{Motivation and Data}
One prominent application of machine learning is patient risk stratification in healthcare. It has been widely noted that developing reliable clinical decision support models is difficult due to changes in clinical practice patterns \citep{agniel2018biases}. The resulting behavior-related associations are often brittle---policies change over time and differ across hospitals---and can cause models to make dangerous predictions if left unaccounted \citep{schulam2017reliable}. We investigate practical implications of the hierarchy on this important risk prediction challenge.

Below we describe our setup which loosely follows the setup of \cite{giannini2019machine} for predicting patient risk of sepsis, a life-threatening response to infection.
We use electronic health record data collected over four years at our institution's hospital. The dataset consists of 278,947 patient encounters that began in the emergency department. The prevalence of sepsis (S) is $2.3\%$. Three categories of variables were extracted: vital signs (V) (heart rate, respiratory rate, and temperature), lab tests (L) (lactate), and demographics (D) (age and gender). For encounters that resulted in sepsis, physiologic data available prior to sepsis onset time was used. For non-sepsis encounters all data available until the time the patient was discharged from the hospital was used. Min, max, and median features were derived for each time-series variable. Unlike vitals, lab measurements are not always ordered (O), so a binary missingness indicator was given. The graph of the DGP is shown in Fig \ref{fig:sepsis-dag}.

\subsubsection{Shifts in Lab Test Ordering Patterns}
Different lab test ordering policies correspond to shifts in the conditional $P(O|s,d)$.
As a result, missingness patterns vary across datasets derived from different hospitals, because the lab test rate can vary from one institution to another \citep{rhee2017incidence}. To compare across datasets corresponding to differing lab testing patterns, we simulated one hundred datasets as follows: For a given test split, we fit a (logistic regression) model of the ordering policy $P(O|s, d)$ (i.e., the $P$ model). Then, for a new ordering policy $Q(O|s, d)$, we reweight the test samples by $\frac{Q}{P}$ to mimic data from a new hospital which differs only in the ordering policy. Reweighting the examples makes it such that the overall distribution of the reweighted test set is different from the distribution of the original test set without perturbing the feature values of individual examples. Thus, the reweighted datasets consist entirely of examples that were observed in the original dataset.

To simulate an edge shift, we created new ordering policies $Q$ by perturbing the coefficient of sepsis in the $P$ model. This corresponds to changing the log odds ratio for sepsis of a patient receiving a lab test. A log odds $< 0$ mean that lab test orders are more likely for non-sepsis patients than for sepsis patients, while a log odds $> 0$ means that lab test orders are more likely for sepsis patients than for non-sepsis patients. To simulate a mechanism shift, we perturbed all coefficients in the $P$ model.

\subsection{Experimental Setup}
Train/test splits were generated via 5-fold cross-validation. Full experimental details are in the Appendix. Models were fit using the Brier score (which for binary classification is $(y - \hat{y})^2$, where $y$ is the true label and $\hat{y}$ is the predicted probability of class $y=1$) as the loss since it is a bounded loss function (required by Theorem \ref{theorem:game}).

\subsubsection{Models}
We consider the four possible models: stable models for each level of the hierarchy and an \emph{unstable} baseline that does not adjust for shifts. In fitting models, any model structure (e.g., random forests, neural networks, etc.) can be used to fit the marginal/conditional distributions. The choice of model does not impact the study conclusions drawn here. For simplicity, we used logistic regression for all models. The level 1 model excludes the lab-derived and lab order features. With respect to the graph in Fig \ref{fig:sepsis-dag}, this effectively deletes the $O$ and $L$ nodes (and all edges into these nodes). The level 2 model is of $P(S|d, v, l, do(o))$, and is an implementation of the ``graph surgery estimator'' \citep{subbaswamy2019transport}. In Fig \ref{fig:sepsis-dag}, the $do$ operator deletes both edges into the $O$ node.  Finally, the level 3 model was trained using Algorithm \ref{alg:gda}, with a logistic regression counterfactual reweighting model. In Fig \ref{fig:sepsis-dag}, this deletes and then replaces the mechanism for $O$ with a new ordering policy mechanism. All models were implemented in \texttt{JAX} \citep{jax2018github}. Full details of how the level 2 and 3 models were fit are in the Appendix.

\subsection{Results}

\begin{figure*}[!t]
\centering
\subfloat[]{
\includegraphics[width=0.45\textwidth]{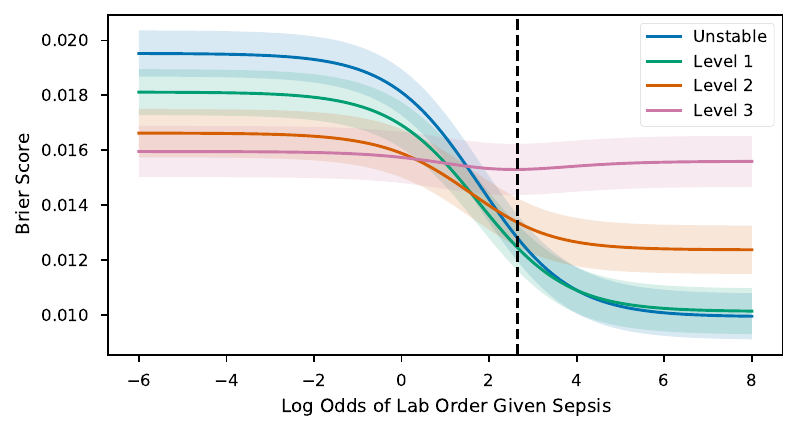}
}\qquad
\subfloat[]{
\includegraphics[width=0.45\textwidth]{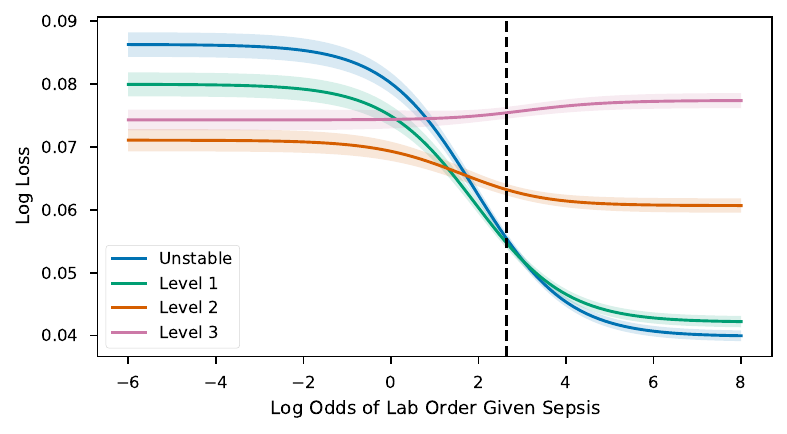}
}

\caption{(a) Performance (Brier score) of different models vs log-odds of lab order for the shift in Fig \ref{fig:sepsis-dag}. Vertical dashed line denotes training value. Shaded regions denote 95\% confidence intervals. (b) Performance (log loss) of different models vs log-odds of lab order for the shift in Fig \ref{fig:sepsis-dag}. Note that the Level 3 model is minimax optimal in (a) while it is not in (b).}
\label{fig:hosp}
\end{figure*}

When test environments differ from the training environment, stable models have more robust performance than unconstrained, unstable models. An unconstrained model uses all dependencies present in the training data; in other words, the model captures correlations due to all paths in the underlying graph. As we impose invariance constraints (by disabling edges), stable models show performance improvements over the unstable model as the test distribution deviates further from the training distribution. We see this, for example, in Fig \ref{fig:hosp}a when, due to the edge shift, the correlation flips from being negative to positive: the level 1, 2, and 3 models outperform the unstable model for log odds ratios $< 1$.

\begin{figure}[!t]
\centering
\includegraphics[scale=0.7]{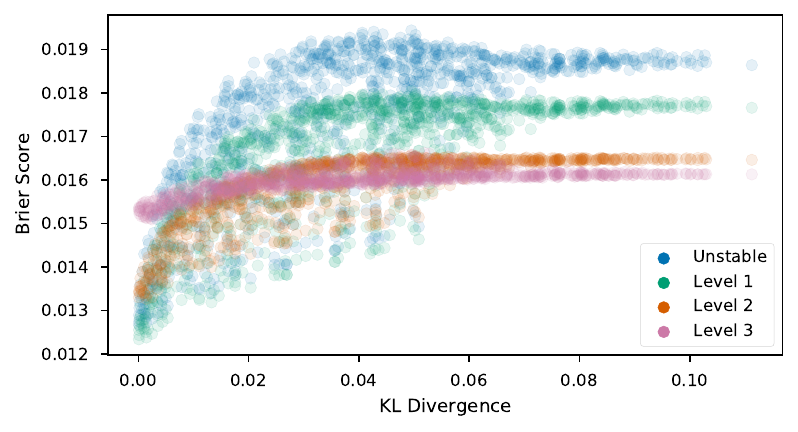}
\caption{Performance (Brier score) of different models vs KL-divergence of new environment distribution from training distribution under a mechanism shift to the lab ordering policy. The Level 3 model is minimax optimal.}
\label{fig:kld}
\end{figure}

As desired, the level 3 model achieves the best worst-case performance amongst the four models, indicating that training using Algorithm \ref{alg:gda} was successful. Further, the performance of the level 3 model is nearly constant across the shifts. This is encouraging evidence, because constant risk is a sufficient condition for a Bayes estimator to be minimax optimal \citep{berger2013statistical}. The results are largely consistent in Fig \ref{fig:kld}, in which we consider a mechanism shift in the lab test ordering policy. We see that irrespective of the KL divergence between the training and shifted distributions, the level 3 model still has almost constant performance.

Finally, from Theorem \ref{theorem:game} we know that the optimal training distribution depends on the choice of loss function (Theorem \ref{theorem:game}). Thus, we do not expect a minimax optimal predictor under one loss to be optimal when measured under a different loss. Indeed, in Fig \ref{fig:hosp}b when the four models are evaluated with respect to the log loss, the level 3 model is no longer minimax optimal. In fact, its performance is strictly worse than that of the level 2 model. Even when evaluated using the Brier score (\ref{fig:hosp}a), the worst-case performance of the level 3 model is only slightly better than the worst-case performance of the level 2 model. Further, the level 2 model sees performance improvements when the log odds increase that the level 3 model does not (loss drops noticeably for x-axis values $> 0$). Thus, on average, the level 2 model might be preferable on this data, and a conservative objective like worst-case performance may not be desirable. This illustrates a classic problem in statistical decision theory: while minimax objectives can be too conservative, it may be difficult to characterize the ``average'' environment or to specify a reasonable prior over environments.

\section{Limitations}
The primary limitation of the framework presented in this paper is its reliance on a known causal graph of the data generating process. Correct specification of the graph is important because the addition of an edge, or the change of orientation of an edge, can change the stability of a distribution. Adding an edge can open new active unstable paths, while a change in orientation of an edge can cause an inactive path to become active (e.g., conditioning on a chain $X \rightarrow Y \rightarrow Z$ we have $X \ci Z | Y$ vs conditioning on a collider $X \rightarrow Y \leftarrow Z$ we have $X \not\ci Z | Y$). As the number of variables increases, it becomes difficult to manually specify an entire causal graph with confidence. In this section, we discuss options for addressing the limitation of misspecification of (or inability to specify) the graph.

When domain knowledge is insufficient to specify a causal graph, one can try to learn the structure of the graph from data, a problem known as \emph{causal discovery} or \emph{structure learning} (see \citep{spirtes2000causation} for an overview). \emph{Constraint-based} structure learning algorithms work by using (conditional) independence tests to determine edge adjacencies. Thus, by testing compatibility with the data, it is possible to learn the structure of the graph up to an \emph{equivalence class}: a set of fully specified graphs which imply the same independences. Notably, some constraint-based structure learning algorithms tolerate and account for the possibility of unknown confounding variables (see \citep{glymour2019review} for a recent review of structure learning algorithms).

Using structure learning, it is possible to learn the range of causal structures which are compatible with the data. Given this range of causal structures, there are two main approaches one could use to find stable distributions. One approach is to enumerate each member of the equivalence class, find and fit models of stable distributions in the fully specified member, and compare across the members of the equivalence class. This approach is akin to sensitivity analysis approaches for finding the range of causal effect estimates in the equivalence class (see, e.g., \citep{maathuis2009estimating,malinsky2017estimating}). The challenge with this approach is that it does not produce a single model that is guaranteed to be stable, but rather a range of candidate ``possibly stable'' models. One would require data from a new environment to test the stability of the candidate models.

A second approach is to find a distribution which is stable in every member of the equivalence class. Such a distribution is guaranteed to be stable, regardless of which member of the equivalence class represents the ``true'' data generating process. While this could be done through enumeration of each member of the equivalence class (as in the previously outlined sensitivity analysis approach), recent approaches allow us to find stable distributions in graphical representations of the equivalence class \citep{zhang2020domain,subbaswamy2020spec}). The output of many constraint-based structure learning algorithms is a \emph{partial} graph in which edges may be partially directed (i.e., edge endpoints may be an arrowhead, an arrow tail, or $\circ$ representing that either is possible). One can then consider extensions of graphical operators from the hierarchy to partial graphs. As one example, \citep{subbaswamy2020spec} propose a method for finding stable level 1 and level 2 distributions in partial graphs. More generally, a promising direction for future work is to extend results from the proposed framework to partial graphs. Because partial graphs can be learned from data, this would relax the requirement of a fully specified graph as the starting point for this graphical framework.

\section{Contrast with Invariant Risk Minimization}
The discussion in this paper has focused on a graphical perspective---explicitly starting with knowledge of the data generating process and using this to determine when and how stability to shifts is achievable. An alternative emerging paradigm in machine learning has focused on \emph{invariant risk minimization} (IRM) \citep{arjovsky2019invariant,koyama2020out,bellot2020generalization}. IRM is applicable when multiple datasets from different environments are available, and the goal is to learn a representation that produces an optimal predictor which is invariant across these environments. In this section, we discuss an important limitation of the invariant risk minimization paradigm which highlights a key advantage of graphical approaches. We also discuss how graphical analyses can guide future work to address this.

A critical question that determines the usefulness of an invariant predictor is: to what set of shifts is the predictor stable? The answer to this question defines the set of new environments to which an invariant predictor can be safely applied. In the graphical approach, the answer is transparent by design. Shifts are defined as (arbitrary) changes to particular causal mechanisms in the graph, so an invariant predictor is exactly one which is stable to the specified shifts in mechanisms. Further, the graph allows model developers to choose the set of shifts to which a predictor should be stable and provide guarantees about shifts that are protected against.

In contrast, IRM methods currently struggle to answer this critical question. First, existing IRM methods do not identify the differences that exist across the observed environments. Thus, they are unable to provide guarantees about the nature of the shifts in environment (i.e., the causal mechanisms) against which they protect. This also means it is difficult to state the set of new environments to which the invariant predictor can be safely applied. Further, because IRM automatically determines invariance from datasets, there is no opportunity for developers to specify particular invariances that they want to hold.

Outside of invariant risk minimization, there are opportunities to leverage ideas from other works on invariant learning and ideas from the proposed graphical framework to improve IRM-type methods. For example, \cite{wald2021calibration} shows a relationship between invariant predictors and calibration across environments. This suggests a possible approach for probing an invariant predictor for stability to particular mechanisms shifts. First, using \emph{structure learning} \citep{spirtes2000causation}, it is possible to detect particular mechanism shifts that occur across environments \citep{zhang2017causal,zhang2020domain,subbaswamy2020spec}. Then, when mechanisms of interest have been identified, one can test for stability to particular mechanisms shifts by examining how the calibration of the predictor changes as evaluation data is reweighted according to the distribution associated with the mechanism shift. This would provide a post-hoc way to verify the integrity of a trained invariant predictor.

As another example, \citep{veitch2021counterfactual} show that counterfactual invariances leave observable distributional signatures that can be used to design regularizers to enforce the given invariance. This motivates the combination of IRM-type objectives with regularizers which explicitly capture desired invariances at different levels of the hierarchy of shift-stability. This would allow developers to specify particular invariances they want to guarantee while also automatically learning other invariances from the data.
In the context of image classification, \citep{heinze2020conditional} show how multiple views of an image and data augmentation can be used to learn models which are invariant to shifts in known and unknown style features. This provides ideas for learning specified invariances in settings with unstructured data (e.g., images and text).

\section{Conclusion}
The use of machine learning in production represents a shift from applying models to static datasets to applying them in the real world. As a result, aspects of the underyling DGP are almost certain to change. Many methods have been developed to find distributions that are stable to dataset shift, but as a field we have lacked common underlying theory to characterize and relate different stable distributions. To address this, we developed a common framework for expressing the different types of shifts as unstable edges in a graphical representation of the DGP. We further showed that stable distributions belong to a causal hierarchy in which stable distributions at different levels have distinct operators that can remove unstable edges in the graph. This provides a new, but natural, way to characterize and construct stable models by only removing unstable edges. This also motivates a new paradigm for future work developing methods that can modify individual edges. We also showed that popular invariant solutions (level 2; invariant under intervention) do not, in general, achieve minimax optimal performance across environments. Our experiments showed that there is a tradeoff between worst-case average performance. Thus, model developers need to carefully determine when and how they achieve invariance.

\appendix

\section{Medical Risk Prediction Experiment}\label{app:real}
\subsection{Data}
Our experimental setup follows that of \cite{delahanty2019development}. The dataset contains electronic health record data collected over four years at our institution’s hospital. The dataset consists of 278,947 emergency department patient encounters. The prevalence of the target disease, sepsis (S), is $2.1\%$. Features pertaining to vital signs (V) (heart rate, respiratory rate, temperature), lab tests (L) (lactate), and demographics (D) (age, gender) were extracted. For encounters that resulted in sepsis (i.e., positive encounters), physiologic data available up until sepsis onset time was used. For non-sepsis encounters, all data available until discharge was used. For each of the time-series physiologic features (V and L), min, max, and median summary features were derived. Unlike vitals, lab measurements are not always ordered (O) and are subject to missingness (lactate $89\%$ missing). To model lab missingness, missingness indicators (O) for the lab features were added, and lab value-missingness interactions terms were used in place of lab value features.

\subsection{Experimental Details}
Logistic Regression models were fit using a custom \texttt{JAX} \cite{jax2018github} implementation. $L_2$ regularization with regularization coefficient $0.1$ was used (hyperparameter chosen via grid search using the performance of the unstable model on a hold-out 10\% of the initial dataset). These same hyperparameters were used to train the Level 1-3 models. For the predictive models, a b-spline basis feature expansion was used for continuous features (lab values and vital signs). Following the standards in \cite{delahanty2019development} for accounting for missingness, the missingness feature and the missingness-lab value interaction features were added.

The specific shift in lab test ordering patterns considered was a shift in lactate ordering, as these patterns have seen great variation across hospitals and are known to be associated with sepsis \cite{rhee2017incidence}. Lactate missingness has a correlation of -0.36 with sepsis in this dataset (i.e., the presence of the measurement is predictive of the target variable).

Thus, to simulate the edge shift, in each test fold, we first fit a logistic regression model (no b-spline basis expansion, with default scikit-learn hyperparameters) to the test fold's lactate missingness ($O=0$) given $S,D$. That is, a logistic regression model of $P(O=0|s,d)$. Then, to simulate the edge shifted lactate ordering policies, we replaced the coefficient for sepsis in the logistic regression model with 100 values on a grid from -6 to 8. The resulting logistic regression model is of the hypothetical shifted hospital's ordering policy $Q(O=0|s, d)$. Evaluating the loss under each shift was then done by using sample weights computed as $\frac{Q_i}{P_i}$ for each test sample using the two models.

The mechanism shift was simulated in a similar manner to the edge shift. However, instead of only perturbing the coefficient of sepsis in the $P(O|s, d)$ model, all coefficients and the intercept were perturbed. Specifically, for a single test fold, 1000 new coefficients were sampled as follows: Let $w$ denote the weight in the $P$ model. The new coefficients/intercepts were drawn from $Unif(-|w|-0.1, |w|+0.1)$. Because all weights of the logistic regression model changed, we plotted the shifts according to the estimated (using the test set) KL-divergence between the $Q$ and $P$ logistic regression models: $E_{P_{S,D}}[KL(P(O|s,d) || Q(O|s,d)]$.

The level 1 model was fit using a reduced feature set that excluded the lactate features (min, max, median) and lactate missingness indicator. The level 2 model $P(S|d,v,l, do(o))$, an instance of the ``graph surgery estimator'' \cite{subbaswamy2019transport}, was fit by inverse probability weighting (IPW). The term corresponding to $O$ in the factorization of the DAG in Fig \ref{fig:hosp}a is $P(O|s,d)$, so we fit a logistic regression model of this distribution using the training data. Then, the main logistic regression prediction model with the full feature set was trained using sample weights $\frac{1}{P(o_i|s_i,d_i)}$. The resulting model was the level 2 model. The level 3 model is similar to the level 2 model, but instead corresponds to a counterfactual ordering policy $Q(O|s,d)$. The level 3 logistic regression model was trained using the Gradient Ascent Descent procedure in Algorithm \ref{alg:gda}. As noted in the main paper, this procedure has complicated dynamics and we found it was quite sensitive to the choice of step size (or learning rate $\eta$). Through grid search using performance on a hold-out 10\% of the initial dataset the value $\eta=5$ was selected. The model parameters $\theta$ were initialized using the learned level 2 model parameters, and the reweighting $\phi$ parameters were initialized via random draws from $\mathcal{N}(0, 0.1^2)$. The resulting model was the level 3 model.

\section{Proofs}\label{app:proofs}

\stable*
\begin{proof}
We first recall that all of the shifts considered in Section \ref{subsec:shifts} are types of arbitrary shifts in mechanism: mean-shifted mechanism are a special parametric case, and edge-strength shifts correspond to a constrained class of mechanism shifts in which the natural direct effect associated with the mechanism has changed. Thus, if a distribution is stable to arbitrary shifts in mechanisms, then it will also be stable to mean-shifts and edge-shifts. For this reason, in our proof we will prove a distribution is stable by leveraging previous graphical results on stability under shifts in mechanisms (and stability to specific cases follows).

To do so, we will leverage results from \emph{transportability}, which uses a graphical representation called \emph{selection diagrams} (see \cite{pearl2011transportability,subbaswamy2019transport} for details). A selection diagram is a a graph augmented with selection variables $\mathbf{S}$ (which each have at most one child) that point to variables whose mechanisms may vary across environments. Prior results have shown that a distribution $P(Y|\mathbf{Z})$ is stable is if $Y \ci \mathbf{S} | \mathbf{Z}$ in the selection diagram (see \cite[Theorem 2]{pearl2011transportability} and \cite[Definition 3]{subbaswamy2019transport}). Thus, to prove the theorem, we will first translate our unstable edge representation of the graph to a selection diagram. Then, we will show that if $Y$ is not $d$-separated from the selection variables that this implies there is an unstable active path to $Y$.

We first translate our unstable edge representation of the graph to a selection diagram. For an edge $e$ let $He(e)$ denote the variables that $e$ points into. Now for each $e\in E_u$, add a unique selection variable that points to each $V = He(e)$. This indicates that the mechanism that generates V is unstable. We now consider the cases in which there could be an active path from a selection variable to $Y$ (which would make a distribution $P(Y|\mathbf{Z})$ unstable), and show that this corresponds to an active path that contains an unstable edge.

There are two possible ways there can be an active path from a variable $S\in\mathbf{S}$ to $Y$. If there is an active forward path from $S$ to $Y$ (e.g., $S \rightarrow ch(S) \rightarrow \dots Y$) then there is a corresponding active path from $Ta(e)$ to $Y$ that contains the unstable edge $e$: e.g., a path $Ta(e) - e \rightarrow ch(S)\rightarrow \dots Y$. Alternatively, an active forward path indicates that the mechanism that generates $Y$ is unstable.

The other case is if there is an active path beginning with a collider from $S$ to $Y$ (e.g., $S \rightarrow ch(S) \leftarrow \dots Y$). Then there is a corresponding active path from $Ta(e)$ to $Y$ that contains $e$: e.g.,  $Ta(e) - e \rightarrow ch(S) \leftarrow \dots Y$. Thus, in a selection diagram if $P(Y|\mathbf{Z})$ is unstable, then there is an active unstable path to $Y$ in the original unstable edge-denoted graph. Taking the contrapositive of this statement proves the theorem.
\end{proof}

\leveltwo*
\begin{proof}
This is a restatement of Corollary 1 in \cite{subbaswamy2019transport}.
\end{proof}

\levelthree*
\begin{proof}
Consider the (level 2) intervention $do(X)=x$. For a variable $V$ letting $V(x)$ denote the value $V$ would have taken had $X$ been set to $x$ we have that $P(V(x))=P(V|do(x))$. When interventions are consistent (i.e., for $x \not = x'$ there are no conflicting interventions $do(X=x)$ and $do(X=x')$) counterfactuals reduce to the \emph{potential responses} of interventions expressible with the $do$ operator \cite[Definition 7.1.4]{pearl2009causality}.
\end{proof}

For completeness, we restate the following result from \cite{grunwald2004game}. For the present paper, both the action space $\mathcal{A}$ and the set of distributions $\Gamma$ are $\mathcal{U}$ (the DM is picking a training distribution (the action $a$ from $\mathcal{U}$ and nature is picking the test distribution $P$ from $\mathcal{U}$).
\begin{theorem}[Theorem 6.1, \cite{grunwald2004game}]
Let $\Gamma \subseteq \mathcal{P}$ be a convex, weakly closed, and tight set of distributions. Suppose that for each $a \in \mathcal{A}$ the loss function $L(x,a)$ is bounded above and upper semicontinuous in x. Then the restricted game $\mathcal{G}^\Gamma=(\Gamma, \mathcal{A}, L)$ has a value. Moreover, a maximum entropy distribution $P^*$, attaining $$\sup_{P\in\Gamma}\inf_{a\in\mathcal{A}} L(P, a),$$ exists.
\end{theorem}

\game*
\begin{proof}
This result follows directly from \cite[Theorem 6.1]{grunwald2004game}.

The preconditions are trivially satisfied: The set of all distributions over $W$ is convex, closed, and tight. We consider bounded loss functions, which for finite discrete $Y$ (i.e., for classification problems) are continuous. Thus, the game has a solution.

Further, by \cite[Corollary  4.2]{grunwald2004game}, the maximum generalized entropy distribution $Q^*$ is also the distribution minimizing the worst-case expected loss.
\end{proof}

\lvltwouniform*
\begin{proof}
We know that every distribution in $\mathcal{U}$ factorizes according to the graph $\mathcal{G}$, and that they only differ in the term corresponding to the mechanism for $W$, $P(W|pa(W))$. Thus, for any $D\in\mathcal{U}$, $P_D(W, pa(W)) = P_D(W|pa(W)) P_D(pa(W)) =P_D(W|pa(W)) P(pa(W))$, noting that $P(pa(W))$ is the same across all members of $\mathcal{U}$. It suffices to show, then, that $P(\mathbf{O}\setminus\{W\}|do(W)) \propto P_Q(\mathbf{O})$ (within a constant factor), such that $P_Q(W,pa(W)) = c P(pa(W))$.

Recall that, by definition, performing $do(W)$ deletes the $W$ term from the factorization (or equivalently sets $P(W=w|do(w), pa(W)) = P(W=w|do(w)) = 1$), resulting in the so called ``truncated factorization.'' Further, the resulting distribution $P(\mathbf{O}\setminus\{W\}|do(W))$ is a proper distribution (sums to 1) over $\mathbf{O}\setminus\{W\}$. Consider two cases: 1) That $W$ is a discrete variable or 2) $W$ is a continuous variable. With slight abuse of notation, for continuous variables the results will be with respect to the pdf.

\begin{enumerate}
    \item Suppose $W$ is discrete and that across environments it is observed to take $k$ distinct values for $k\in\mathbb{N}, k < \infty$. $P(\mathbf{O}\setminus\{W\}|do(W))$ is not a proper distribution over $\mathbf{O}$ because $\sum_w P(W=w|do(w), pa(W)) = k$. However, this can be made proper by by normalizing it such that $P(W=w|do(w), pa(W)) = \frac{1}{k}$. Thus, $P(\mathbf{O}\setminus\{W\}|do(W))$ is within a constant factor of $P_Q(\mathbf{O})$ where $Q$ is the member of $\mathcal{U}$ such that $P(W|pa(W)) = P(W) = \frac{1}{k}$ (i.e., where $W$ has a discrete uniform distribution). W.r.t. the theorem statement, $c = \frac{1}{k}$.
    
    \item This case follows similarly. Suppose $W$ is continuous and that across environments it is observed to be bounded in the interval $[-M, M], 0 < M < \infty$. Then $P(\mathbf{O}\setminus\{W\}|do(W))$ is not a proper density over $\mathbf{O}$ because $\int_{-M}^M f(W=w|do(w), pa(W)) dw = 2M$, but this can be made proper by normalizing the pdf of $P(W|pa(W))$ to be $\frac{1}{2M}$. Thus, the level 2 density is within a constant factor of $Q$, the member of $\mathcal{U}$ where $W$ has a continuous uniform distribution over the interval $[-M, M]$.  W.r.t. the theorem statement, $c = \frac{1}{2M}$.
\end{enumerate}
\end{proof}

\begin{figure}[!th]
\centering
\begin{tikzpicture}
          \def\unit{.75}
            \node (a) at (0.75*\unit, -\unit) [label=right:W,point];
            \node (t) at (-0.75*\unit,  -\unit) [label=left:Y,point];
            \node (c) at (0, -1.5*\unit) [label=below:Z,point];
            \path[orange] (t) edge (a);
            \path (t) edge  (c);
            \path (a) edge (c);
        \end{tikzpicture}
\caption{Graph for the counterexample.}
\label{fig:counterexample}
\vspace{-0.2in}
\end{figure}
\begin{corollary}
Stable level two distributions are not, in general, minimax optimal.
\end{corollary}
\begin{proof}
The following counterexample is adapted from an example in \cite{van2019robust}.

Consider the DAG $\mathcal{G}$ in Fig \ref{fig:counterexample} in which the goal is to predict $Y$ from $W$ and $Z$, and the mechanism for generating $W$ (i.e., $P(W|Y)$) varies across environments. The distribution factorizes as $P(Z, W, Y) = P(Z|W,Y)P(W|Y)P(Y)$.

Let all variables be binary, and assume that $P(Y=1) = \frac{1}{2}$ and $P(Z=1|W,Y) = \frac{1}{2}$ if $Y=X$ and $P(Z=1|W,Y) = 1$ otherwise. Finally, we will parameterize $P(W|Y)$ as follows: $P(W=1|Y=0) = 1-\alpha_0$ and $P(W=1|Y=1) = \alpha_1$ for $\alpha_0,\alpha_1 \in [0, 1]$. For the Brier score, \cite{van2019robust} computed the maximum generalized entropy parameter values to be $\alpha_0= 2-\sqrt{2}$ and $\alpha_1=2-\sqrt{2}$.

Thus, the minimax optimal $P(W|Y)$ that yields the maximum generalized entropy $P(Y|W,Z)$ is $P(W=1|Y=1) = 2-\sqrt{2} \approx 0.586$ and $P(W=1|Y=0) = \sqrt{2} - 1 \approx 0.414$. This is different than the $P(W|Y)$ that yields the $P(Y|W,Z)$ equivalent to the stable level 2 solution $P(Y|do(W), Z)$, which is $P(W|Y) = P(W) = 0.5$ (by Proposition \ref{prop:lvl2-uniform}). Thus, the level 2 solution $P(Y|do(W), Z)$ is not optimal for this graph using the Brier score (this also holds for the log loss; see the computations in \cite{van2019robust}).
\end{proof}

\lvlthreeopt*
\begin{proof}
Given that our training data was generated from $P_0 \in \mathcal{U}$, we are interested in $P(Y|X)$ if counterfactually $W$ had been generated using the mechanism (i.e., we edited the structural equation in the SCM) $g_{B^*}(pa(W),\epsilon_w)$, the mechanism for $W$ associated with the environment $B^* \in\mathcal{U}$ identified in Theorem \ref{theorem:game}. This mechanism change produces a new distribution associated with $W$, $P_{B^*}(W|pa(W))$.

We can represent this counterfactually by letting $Z(W_{B^*}) = Z(g_{B^*}(pa(W))$ be the potential outcome of $Z$ had $W$ been generated according to $g_{B^*}$ for some variable $Z$ (the rhs notation is sometimes used to express policy interventions, see, e.g., \cite{sherman2020intervening}). Thus, the counterfactual distribution can be expressed as $P(Y(W_{B^*})|W_{B^*}, pa(W), \mathbf{X}\setminus\{W, pa(W)\}(W_{B^*})) = P(Y(W_{B^*})|\mathbf{X}(W_{B^*}))$ (noting that $pa(W)(W_{B^*}) = pa(W)$ because changing the mechanism of $W$ does not affect its parents). Because $P_0$ and $B^*$ differ only with respect to the mechanism generating $W$, the counterfactual distribution associated with this mechanism change yields $P_{B^*}(Y|X)$. Thus, $P(Y(W_{B^*})|\mathbf{X}(W_{B^*}))$ is minimax optimal because $P_{B^*}(Y|X)$ was shown to be minimax optimal in Theorem \ref{theorem:game}.
\end{proof}

\section{Likelihood Reweighting In The Presence of Unobserved Confounders}
In Section \ref{sec:minimax} we developed a likelihood reweighting formulation for a minimax optimal predictor by assuming that the graph has no bidirected edges (no unobserved confounders). We now relax this condition.

First, note that the \emph{c-component} (or \emph{district}) of a variable in an ADMG is the set of nodes reachable via purely bidirected paths (i.e., paths of the form $V_1 \leftrightarrow \dots \leftrightarrow V_2$). An ADMG over variables $\mathbf{O}$ factorizes as:
\begin{equation*}
    Q[\mathbf{O}] = P(\mathbf{O}) = \sum_{\mathbf{U}} \prod_{O_i\in\mathbf{O}} P(O_i|pa(O_i), U_i) P(\mathbf{U})
\end{equation*}
where $\mathbf{U}$ are the exogenous noise variables. Note that an ADMG factorizes as a product of Q-factors over the c-components. That is, if $\mathbf{O}$ is partitioned into c-components $\{S_1, S_2, \dots, S_k\}$, then $P(\mathbf{O}) = \prod_1^k Q[S_k]$ \cite[Lemma 7]{tian2002studies}.\footnote{When the graph has no bidirected edges, each node is its own c-component.} Finally, let $O_1 < O_2 < \dots < O_n$ be a topological order over $\mathbf{O}$. Then each c-factor is identifiable and given by $Q[S_j] = \prod_{i|O_i\in S_j} P(O_i|pa(T_i)\setminus\{O_i\})$, where $T_i$ is the c-component of $\mathcal{G}_{O_1\dots O_i}$ that contains $O_i$.

Now we can see that the term $P(O_i|pa(T_i)\setminus\{O_i\})$ is the ADMG generalization of $P(O_i|pa(O_i))$ in DAGs, and is the term associated with the mechanism for generating $O_i$. Thus, if $P(W|do(pa(W)))$ is identifiable in the ADMG, then we we need to perform likelihood reweighting with respect to $P(W|pa(T_W)\setminus\{W\})$. That is, let $B^*\in\mathcal{U}$ be the minimax optimal training distribution/environment. Then 
\begin{align*}
    E_{B^*}[\ell(f(\mathbf{x}, y))] &= E_{P_0}\left[\frac{P_{B^*}(\mathbf{O})}{P_{P_0}(\mathbf{O})}\ell(f(\mathbf{x}, y))\right]\\
    &= E_{P_0}\left[\frac{P_{B^*}(W|pa(T_W)\setminus\{W\})}{P_{P_0}(W|pa(T_W)\setminus\{W\})}\ell(f(\mathbf{x}, y))\right]
\end{align*}
and we can define a reweighting function as before.

\bibliographystyle{vancouver}
\bibliography{references}

\begin{thebibliography}{10}

\bibitem{strickland2018hospitals}
Strickland E.
\newblock Hospitals roll out AI systems to keep patients from dying of sepsis.
\newblock IEEE Spectrum. 2018;19.

\bibitem{winston2018palantir}
Winston A.
\newblock Palantir has secretly been using New Orleans to test its predictive
  policing technology.
\newblock The Verge. 2018;27.

\bibitem{angwin2016machine}
Angwin J, Larson J, Mattu S, Kirchner L.
\newblock Machine bias.
\newblock ProPublica, May. 2016;23(2016):139--159.

\bibitem{candela2009dataset}
Qui{\~n}onero-Candela J, Sugiyama M, Schwaighofer A, Lawrence ND.
\newblock Dataset shift in machine learning.
\newblock The MIT Press; 2009.

\bibitem{finlayson2021clinician}
Finlayson SG, Subbaswamy A, Singh K, Bowers J, Kupke A, Zittrain J, et~al.
\newblock The Clinician and Dataset Shift in Artificial Intelligence.
\newblock New England Journal of Medicine. 2021;385(3):283--286.

\bibitem{dickson2020coronavirus}
Dickson B. How the Coronavirus Pandemic Is Breaking Artificial Intelligence and
  How to Fix It. Gizmodo; 2020.
\newblock Available from:
  \url{https://gizmodo.com/how-the-coronavirus-pandemic-is-breaking-artificial-int-1844544143}.

\bibitem{agniel2018biases}
Agniel D, Kohane IS, Weber GM.
\newblock Biases in electronic health record data due to processes within the
  healthcare system: retrospective observational study.
\newblock Bmj. 2018;361:k1479.

\bibitem{grytten2003practice}
Grytten J, S{\o}rensen R.
\newblock Practice variation and physician-specific effects.
\newblock Journal of health economics. 2003;22(3):403--418.

\bibitem{cutler2019physician}
Cutler D, Skinner JS, Stern AD, Wennberg D.
\newblock Physician beliefs and patient preferences: a new look at regional
  variation in health care spending.
\newblock American Economic Journal: Economic Policy. 2019;11(1):192--221.

\bibitem{schulam2017reliable}
Schulam P, Saria S.
\newblock Reliable decision support using counterfactual models.
\newblock In: Advances in Neural Information Processing Systems; 2017. p.
  1697--1708.

\bibitem{zech2018variable}
Zech JR, Badgeley MA, Liu M, Costa AB, Titano JJ, Oermann EK.
\newblock Variable generalization performance of a deep learning model to
  detect pneumonia in chest radiographs: A cross-sectional study.
\newblock PLoS medicine. 2018;15(11):e1002683.

\bibitem{subbaswamycounterfactual}
Subbaswamy A, Saria S.
\newblock Counterfactual Normalization: Proactively Addressing Dataset Shift
  Using Causal Mechanisms.
\newblock In: Uncertainty in Artificial Intelligence; 2018. p. 947--957.

\bibitem{pearl2011transportability}
Pearl J, Bareinboim E.
\newblock Transportability of causal and statistical relations: a formal
  approach.
\newblock In: Proceedings of the Twenty-Fifth AAAI Conference on Artificial
  Intelligence. AAAI Press; 2011. p. 247--254.

\bibitem{stuart2015assessing}
Stuart EA, Bradshaw CP, Leaf PJ.
\newblock Assessing the generalizability of randomized trial results to target
  populations.
\newblock Prevention Science. 2015;16(3):475--485.

\bibitem{bareinboim2016causal}
Bareinboim E, Pearl J.
\newblock Causal inference and the data-fusion problem.
\newblock Proceedings of the National Academy of Sciences.
  2016;113(27):7345--7352.

\bibitem{degtiar2021review}
Degtiar I, Rose S.
\newblock A review of generalizability and transportability.
\newblock arXiv preprint arXiv:210211904. 2021.

\bibitem{heckman1974shadow}
Heckman J.
\newblock Shadow prices, market wages, and labor supply.
\newblock Econometrica: journal of the econometric society. 1974:679--694.

\bibitem{heckman1979sample}
Heckman JJ.
\newblock Sample selection bias as a specification error.
\newblock Econometrica: Journal of the econometric society. 1979:153--161.

\bibitem{winship1992models}
Winship C, Mare RD.
\newblock Models for sample selection bias.
\newblock Annual review of sociology. 1992;18(1):327--350.

\bibitem{vella1998estimating}
Vella F.
\newblock Estimating models with sample selection bias: a survey.
\newblock Journal of Human Resources. 1998:127--169.

\bibitem{magliacane2018domain}
Magliacane S, van Ommen T, Claassen T, Bongers S, Versteeg P, Mooij JM.
\newblock Domain adaptation by using causal inference to predict invariant
  conditional distributions.
\newblock In: Advances in Neural Information Processing Systems; 2018. p.
  10869--10879.

\bibitem{subbaswamy2019transport}
Subbaswamy A, Schulam P, Saria S.
\newblock Preventing Failures Due to Dataset Shift: Learning Predictive Models
  That Transport.
\newblock In: Artificial Intelligence and Statistics (AISTATS); 2019. p.
  3118--3127.

\bibitem{subbaswamy2020spec}
Subbaswamy A, Saria S.
\newblock I-SPEC: An End-to-End Framework for Learning Transportable,
  Shift-Stable Models.
\newblock arXiv preprint arXiv:200208948. 2020.

\bibitem{veitch2021counterfactual}
Veitch V, D'Amour A, Yadlowsky S, Eisenstein J.
\newblock Counterfactual invariance to spurious correlations in text
  classification.
\newblock Advances in Neural Information Processing Systems. 2021;34.

\bibitem{ilse2021selecting}
Ilse M, Tomczak JM, Forr{\'e} P.
\newblock Selecting Data Augmentation for Simulating Interventions.
\newblock In: International Conference on Machine Learning. PMLR; 2021. p.
  4555--4562.

\bibitem{rojas2018invariant}
Rojas-Carulla M, Sch{\"o}lkopf B, Turner R, Peters J.
\newblock Invariant models for causal transfer learning.
\newblock The Journal of Machine Learning Research. 2018;19(1):1309--1342.

\bibitem{arjovsky2019invariant}
Arjovsky M, Bottou L, Gulrajani I, Lopez-Paz D.
\newblock Invariant Risk Minimization.
\newblock arXiv preprint arXiv:190702893. 2019.

\bibitem{bellot2020generalization}
Bellot A, van~der Schaar M.
\newblock Generalization and invariances in the presence of unobserved
  confounding.
\newblock arXiv preprint arXiv:200710653. 2020.

\bibitem{koyama2020out}
Koyama M, Yamaguchi S.
\newblock Out-of-distribution generalization with maximal invariant predictor.
\newblock arXiv preprint arXiv:200801883. 2020.

\bibitem{campbell1963experimental}
Campbell DT, Stanley JC, Gage NL.
\newblock Experimental and quasi-experimental designs for research.
\newblock Houghton, Mifflin and Company; 1963.

\bibitem{rothwell2010commentary}
Rothwell PM.
\newblock Commentary: External validity of results of randomized trials:
  disentangling a complex concept.
\newblock International journal of epidemiology. 2010;39(1):94--96.

\bibitem{cole2010generalizing}
Cole SR, Stuart EA.
\newblock Generalizing evidence from randomized clinical trials to target
  populations: the ACTG 320 trial.
\newblock American journal of epidemiology. 2010;172(1):107--115.

\bibitem{stuart2011use}
Stuart EA, Cole SR, Bradshaw CP, Leaf PJ.
\newblock The use of propensity scores to assess the generalizability of
  results from randomized trials.
\newblock Journal of the Royal Statistical Society: Series A (Statistics in
  Society). 2011;174(2):369--386.

\bibitem{pearl2014external}
Pearl J, Bareinboim E, et~al.
\newblock External validity: From do-calculus to transportability across
  populations.
\newblock Statistical Science. 2014;29(4):579--595.

\bibitem{dahabreh2019generalizing}
Dahabreh IJ, Robins JM, Haneuse SJ, Hern{\'a}n MA.
\newblock Generalizing causal inferences from randomized trials: counterfactual
  and graphical identification.
\newblock arXiv preprint arXiv:190610792. 2019.

\bibitem{camerer2011promise}
Camerer C.
\newblock The promise and success of lab-field generalizability in experimental
  economics: A critical reply to Levitt and List.
\newblock Available at SSRN 1977749. 2011.

\bibitem{huang2007correcting}
Huang J, Gretton A, Borgwardt K, Sch{\"o}lkopf B, Smola AJ.
\newblock Correcting sample selection bias by unlabeled data.
\newblock In: Advances in neural information processing systems; 2007. p.
  601--608.

\bibitem{zhang2013domain}
Zhang K, Sch{\"o}lkopf B, Muandet K, Wang Z.
\newblock Domain adaptation under target and conditional shift.
\newblock In: International Conference on Machine Learning; 2013. p. 819--827.

\bibitem{ganin2016domain}
Ganin Y, Ustinova E, Ajakan H, Germain P, Larochelle H, Laviolette F, et~al.
\newblock Domain-adversarial training of neural networks.
\newblock The Journal of Machine Learning Research. 2016;17(1):2096--2030.

\bibitem{gong2016domain}
Gong M, Zhang K, Liu T, Tao D, Glymour C, Sch{\"o}lkopf B.
\newblock Domain adaptation with conditional transferable components.
\newblock In: International conference on machine learning; 2016. p.
  2839--2848.

\bibitem{correa2019statistical}
Correa JD, Bareinboim E.
\newblock From Statistical Transportability to Estimating the Effect of
  Stochastic Interventions.
\newblock In: IJCAI; 2019. p. 1661--1667.

\bibitem{sinha2017certifying}
Sinha A, Namkoong H, Duchi J.
\newblock Certifying some distributional robustness with principled adversarial
  training.
\newblock arXiv preprint arXiv:171010571. 2017.

\bibitem{duchi2016variance}
Duchi J, Namkoong H.
\newblock Variance-based regularization with convex objectives.
\newblock arXiv preprint arXiv:161002581. 2016.

\bibitem{heinze2020conditional}
Heinze-Deml C, Meinshausen N.
\newblock Conditional variance penalties and domain shift robustness.
\newblock Machine Learning. 2020:1--46.

\bibitem{rothenhausler2018anchor}
Rothenh{\"a}usler D, Meinshausen N, B{\"u}hlmann P, Peters J.
\newblock Anchor regression: heterogeneous data meets causality.
\newblock arXiv preprint arXiv:180106229. 2018.

\bibitem{oberst2021regularizing}
Oberst M, Thams N, Peters J, Sontag D.
\newblock Regularizing towards Causal Invariance: Linear Models with Proxies.
\newblock arXiv preprint arXiv:210302477. 2021.

\bibitem{duchi2021learning}
Duchi JC, Namkoong H.
\newblock Learning models with uniform performance via distributionally robust
  optimization.
\newblock The Annals of Statistics. 2021;49(3):1378--1406.

\bibitem{muandet2013domain}
Muandet K, Balduzzi D, Sch{\"o}lkopf B.
\newblock Domain generalization via invariant feature representation.
\newblock In: International Conference on Machine Learning; 2013. p. 10--18.

\bibitem{ahuja2020invariant}
Ahuja K, Shanmugam K, Varshney K, Dhurandhar A.
\newblock Invariant risk minimization games.
\newblock In: International Conference on Machine Learning. PMLR; 2020. p.
  145--155.

\bibitem{peters2016causal}
Peters J, B{\"u}hlmann P, Meinshausen N.
\newblock Causal inference by using invariant prediction: identification and
  confidence intervals.
\newblock Journal of the Royal Statistical Society: Series B (Statistical
  Methodology). 2016;78(5):947--1012.

\bibitem{kuang2018stable}
Kuang K, Cui P, Athey S, Xiong R, Li B.
\newblock Stable prediction across unknown environments.
\newblock In: Proceedings of the 24th ACM SIGKDD International Conference on
  Knowledge Discovery \& Data Mining. ACM; 2018. p. 1617--1626.

\bibitem{kuang2020stable}
Kuang K, Xiong R, Cui P, Athey S, Li B.
\newblock Stable prediction with model misspecification and agnostic
  distribution shift.
\newblock In: Proceedings of the AAAI Conference on Artificial Intelligence.
  vol.~34; 2020. p. 4485--4492.

\bibitem{kaushik2019learning}
Kaushik D, Hovy E, Lipton ZC.
\newblock Learning the Difference that Makes a Difference with
  Counterfactually-Augmented Data.
\newblock arXiv preprint arXiv:190912434. 2019.

\bibitem{kaushik2020explaining}
Kaushik D, Setlur A, Hovy EH, Lipton ZC.
\newblock Explaining the Efficacy of Counterfactually Augmented Data.
\newblock In: International Conference on Learning Representations; 2020. .

\bibitem{sundin2019active}
Sundin I, Schulam P, Siivola E, Vehtari A, Saria S, Kaski S.
\newblock Active learning for decision-making from imbalanced observational
  data.
\newblock arXiv preprint arXiv:190405268. 2019.

\bibitem{zhang2020domain}
Zhang K, Gong M, Stojanov P, Huang B, Glymour C.
\newblock Domain Adaptation As a Problem of Inference on Graphical Models.
\newblock arXiv preprint arXiv:200203278. 2020.

\bibitem{pearl2009causality}
Pearl J.
\newblock Causality.
\newblock Cambridge university press; 2009.

\bibitem{scholkopf2012causal}
Sch{\"o}lkopf B, Janzing D, Peters J, Sgouritsa E, Zhang K, Mooij J.
\newblock On causal and anticausal learning.
\newblock In: Proceedings of the 29th International Coference on International
  Conference on Machine Learning; 2012. p. 459--466.

\bibitem{meinshausen2018causality}
Meinshausen N.
\newblock Causality from a distributional robustness point of view.
\newblock In: 2018 IEEE Data Science Workshop (DSW). IEEE; 2018. p. 6--10.

\bibitem{ogburn2014causal}
Ogburn EL, VanderWeele TJ, et~al.
\newblock Causal diagrams for interference.
\newblock Statistical science. 2014;29(4):559--578.

\bibitem{sherman2020intervening}
Sherman E, Shpitser I.
\newblock Intervening on network ties.
\newblock In: Uncertainty in Artificial Intelligence. PMLR; 2020. p. 975--984.

\bibitem{avin2005identifiability}
Avin C, Shpitser I, Pearl J.
\newblock Identifiability of path-specific effects.
\newblock In: IJCAI International Joint Conference on Artificial Intelligence;
  2005. p. 357--363.

\bibitem{bareinboim2012transportability}
Bareinboim E, Pearl J.
\newblock Transportability of causal effects: Completeness results.
\newblock In: Proceedings of the AAAI Conference on Artificial Intelligence.
  vol.~26; 2012. p. 698--704.

\bibitem{bareinboim2013meta}
Bareinboim E, Pearl J.
\newblock Meta-transportability of causal effects: A formal approach.
\newblock In: Artificial Intelligence and Statistics. PMLR; 2013. p. 135--143.

\bibitem{lee2020general}
Lee S, Correa J, Bareinboim E.
\newblock General transportability--synthesizing observations and experiments
  from heterogeneous domains.
\newblock In: Proceedings of the AAAI Conference on Artificial Intelligence.
  vol.~34; 2020. p. 10210--10217.

\bibitem{lee2020generalized}
Lee S, Correa JD, Bareinboim E.
\newblock Generalized transportability: Synthesis of experiments from
  heterogeneous domains.
\newblock In: Proceedings of the 34th AAAI Conference on Artificial
  Intelligence; 2020. .

\bibitem{pearl1988probabilistic}
Pearl J.
\newblock Probabilistic Reasoning in Intelligent Systems: Networks of Plausible
  Inference.
\newblock Morgan Kaufmann; 1988.

\bibitem{shpitser2016causal}
Shpitser I, Tchetgen ET.
\newblock Causal inference with a graphical hierarchy of interventions.
\newblock Annals of statistics. 2016;44(6):2433.

\bibitem{shpitser2007counterfactuals}
Shpitser I, Pearl J.
\newblock What counterfactuals can be tested.
\newblock In: 23rd Conference on Uncertainty in Artificial Intelligence, UAI
  2007; 2007. p. 352--359.

\bibitem{grunwald2004game}
Gr{\"u}nwald PD, Dawid AP, et~al.
\newblock Game theory, maximum entropy, minimum discrepancy and robust Bayesian
  decision theory.
\newblock Annals of statistics. 2004;32(4):1367--1433.

\bibitem{van2019robust}
van Ommen T.
\newblock Robust Causal Domain Adaptation in a Simple Diagnostic Setting.
\newblock In: International Symposium on Imprecise Probabilities: Theories and
  Applications. PMLR; 2019. p. 424--429.

\bibitem{daskalakis2017training}
Daskalakis C, Ilyas A, Syrgkanis V, Zeng H.
\newblock Training gans with optimism.
\newblock arXiv preprint arXiv:171100141. 2017.

\bibitem{daskalakis2018limit}
Daskalakis C, Panageas I.
\newblock The limit points of (optimistic) gradient descent in min-max
  optimization.
\newblock In: Proceedings of the 32nd International Conference on Neural
  Information Processing Systems; 2018. p. 9256--9266.

\bibitem{lin2020gradient}
Lin T, Jin C, Jordan M.
\newblock On gradient descent ascent for nonconvex-concave minimax problems.
\newblock In: International Conference on Machine Learning. PMLR; 2020. p.
  6083--6093.

\bibitem{giannini2019machine}
Giannini HM, Ginestra JC, Chivers C, Draugelis M, Hanish A, Schweickert WD,
  et~al.
\newblock A Machine Learning Algorithm to Predict Severe Sepsis and Septic
  Shock: Development, Implementation, and Impact on Clinical Practice.
\newblock Critical care medicine. 2019;47(11):1485--1492.

\bibitem{rhee2017incidence}
Rhee C, Dantes R, Epstein L, Murphy DJ, Seymour CW, Iwashyna TJ, et~al.
\newblock Incidence and trends of sepsis in US hospitals using clinical vs
  claims data, 2009-2014.
\newblock Jama. 2017;318(13):1241--1249.

\bibitem{jax2018github}
Bradbury J, Frostig R, Hawkins P, Johnson MJ, Leary C, Maclaurin D, et~al..
  Google, editor. {JAX}: composable transformations of {P}ython+{N}um{P}y
  programs. GitHub; 2018.
\newblock Available from: \url{http://github.com/google/jax}.

\bibitem{berger2013statistical}
Berger JO.
\newblock Statistical decision theory and Bayesian analysis.
\newblock Springer Science \& Business Media; 2013.

\bibitem{spirtes2000causation}
Spirtes P, Glymour CN, Scheines R, Heckerman D, Meek C, Cooper G, et~al.
\newblock Causation, prediction, and search.
\newblock MIT press; 2000.

\bibitem{glymour2019review}
Glymour C, Zhang K, Spirtes P.
\newblock Review of causal discovery methods based on graphical models.
\newblock Frontiers in genetics. 2019;10:524.

\bibitem{maathuis2009estimating}
Maathuis MH, Kalisch M, B{\"u}hlmann P.
\newblock Estimating high-dimensional intervention effects from observational
  data.
\newblock The Annals of Statistics. 2009;37(6A):3133--3164.

\bibitem{malinsky2017estimating}
Malinsky D, Spirtes P.
\newblock Estimating bounds on causal effects in high-dimensional and possibly
  confounded systems.
\newblock International Journal of Approximate Reasoning. 2017;88:371--384.

\bibitem{wald2021calibration}
Wald Y, Feder A, Greenfeld D, Shalit U.
\newblock On calibration and out-of-domain generalization.
\newblock Advances in Neural Information Processing Systems. 2021;34.

\bibitem{zhang2017causal}
Zhang K, Huang B, Zhang J, Glymour C, Sch{\"o}lkopf B.
\newblock Causal discovery from nonstationary/heterogeneous data: Skeleton
  estimation and orientation determination.
\newblock In: IJCAI: Proceedings of the Conference. vol. 2017. NIH Public
  Access; 2017. p. 1347.

\bibitem{delahanty2019development}
Delahanty RJ, Alvarez J, Flynn LM, Sherwin RL, Jones SS.
\newblock Development and evaluation of a machine learning model for the early
  identification of patients at risk for sepsis.
\newblock Annals of emergency medicine. 2019;73(4):334--344.

\bibitem{tian2002studies}
Tian J.
\newblock Studies in Causal Reasoning and Learning [Ph.D. thesis].
\newblock University of California Los Angeles; 2002.

\end{thebibliography}

\end{document}